\documentclass[10pt, twocolumn]{article}

\usepackage{arxiv}
\usepackage{geometry}

\geometry{
 left=21mm,
 right=21mm,
 top=20mm,
 bottom=20mm,
 }
\setlength{\columnsep}{8mm}
\usepackage{changepage}

\usepackage{hyperref}

\usepackage{authblk}

\usepackage{algorithm}

\usepackage[utf8]{inputenc}

\usepackage{hyperref}
\hypersetup{
    colorlinks=true,
    citecolor=blue,
    filecolor=blue,
    linkcolor=blue,
    urlcolor=blue,
}
\usepackage{natbib}
\usepackage{enumitem}
\usepackage{microtype}
\usepackage{amsmath,amsthm,amssymb}

\usepackage{dsfont}
\usepackage{mathrsfs}
\usepackage[noabbrev]{cleveref}
\usepackage{stmaryrd}
\usepackage{mathtools}
\usepackage{graphicx}
\usepackage{subcaption}
\usepackage{booktabs}
\usepackage{dblfloatfix} 
\usepackage{multirow}
\usepackage{enumitem}
\usepackage{comment}
\usepackage{amsmath, amssymb}
\usepackage{amsthm}

\theoremstyle{plain}
\newtheorem{theorem}{Theorem}[section]
\newtheorem{proposition}[theorem]{Proposition}
\newtheorem{lemma}[theorem]{Lemma}

\theoremstyle{definition}
\newtheorem{definition}[theorem]{Definition}
\newtheorem{assumption}{Assumption}

\theoremstyle{remark}
\newtheorem{remark}[theorem]{Remark}

\crefname{assumption}{assumption}{assumptions}

\newtheoremstyle{TheoremNum}
    {\topsep}{\topsep}              
    {\itshape}                      
    {}                              
    {\bfseries}                     
    {.}                             
    { }                             
    {\thmname{#1}\thmnote{ \bfseries #3}}
\theoremstyle{TheoremNum}

\theoremstyle{definition}

\theoremstyle{definition}

\theoremstyle{definition}

\usepackage{algcompatible}

\usepackage{breqn}

\usepackage[dvipsnames]{xcolor}

\newcommand{\ind}{\perp\!\!\!\!\perp} 

\usepackage[hang,flushmargin,bottom]{footmisc}

\usepackage{xspace}

\usepackage{wasysym}

\def\mis{\mathrm{mis}}
\def\obs{\rm{obs}}

\def\ind{\perp\!\!\!\!\perp }

\newcommand{\mX}{\mathcal{X}}

\newcommand{\NA}{\mathrm{NA}}

\newcommand{\mask}{\operatorname{mask}}


\DeclareMathAlphabet{\mathdutchcal}{U}{dutchcal}{m}{n}

\definecolor{mydarkblue}{rgb}{0,0.08,0.45}
\hypersetup{
    colorlinks=true,
    linkcolor=mydarkblue,
    citecolor=mydarkblue,
    filecolor=mydarkblue,
    urlcolor=mydarkblue,
}

\usepackage{times}

\footskip = 25pt 

\title{Weighted Conformal Prediction Provides Adaptive and Valid Mask-Conditional Coverage for General Missing Data Mechanisms}
\author[1]{Jiarong Fan}
\author[1,2]{Juhyun Park}
\author[1,2]{Thi Phuong Thuy Vo}
\author[1,2,3]{Nicolas Brunel}

\date{}

\affil[1]{LaMME, University of Paris-Saclay, Evry}
\affil[2]{ENSIIE, Evry}
\affil[3]{Capgemini Invent, Paris, France}

\begin{document}

\maketitle

\begin{adjustwidth}{0.7cm}{0.7cm}

\begin{abstract}
Conformal prediction (CP) offers a principled framework for uncertainty quantification, but it fails to guarantee coverage when faced with missing covariates. In addressing the heterogeneity induced by various missing patterns, Mask-Conditional Valid (MCV) Coverage has emerged as a more desirable property than Marginal Coverage. In this work, we adapt split CP to handle missing values by proposing a preimpute-mask-then-correct framework that can offer valid coverage.
We show that our method provides guaranteed Marginal Coverage and Mask-Conditional Validity for general missing data mechanisms. A key component of our approach is a reweighted conformal prediction procedure that corrects the prediction sets after distributional imputation (multiple imputation) of the calibration dataset, making our method compatible with standard imputation pipelines.
We derive two algorithms, and we show that they are approximately marginally valid and MCV. We evaluate them on synthetic and real-world datasets. 
It reduces significantly the width of prediction intervals w.r.t standard MCV methods, while maintaining the target guarantees.

\end{abstract}

\end{adjustwidth}

\section{Introduction}

Uncertainty estimation is essential for deploying machine learning models in real-world applications, such as finance, healthcare, and autonomous systems.
A common approach to quantifying uncertainty is through prediction sets, such as intervals in regression or label collections in classification. The main goal of a prediction set is to contain the true label of a sample point with high probability, e.g., 90\%. 

To obtain valid prediction sets, Conformal Prediction (CP) has emerged as a widely adopted framework \citep{vovk2005algorithmic}. CP provides a 
model-agnostic and distribution-free approach to constructing prediction sets for any machine learning model \citep{angelopoulos2023conformal}.
Given the random variable of test point $(X, Y) \in \mathcal{X} \times \mathcal{Y}$, where $\mathcal{X}$ contains $d$ features such that $\forall i \in \{1,...,d\}, X_i \in \mathcal{X}_i$ with each $\mathcal{X}_i  $ typically $\mathbb{R}$ or a discrete space, and a miscoverage rate $\alpha$, CP aims to create a set $\hat{C}_{\alpha}(X)$ that contains $Y$  with probability $1 - \alpha$. Split CP \citep{papadopoulos2002inductive}, one of the most used CP techniques, achieves this by leveraging a calibration dataset to evaluate the model’s uncertainty.

While CP seems to be well-adapted for dealing with applications under only the exchangeability assumption (weaker than i.i.d.) on the data-generating process, it fails to provide guarantees under perturbations, such as noisy labels, or missing data. 
\citep{einbinder2024label,zaffran2023conformal,zaffran2024predictiveuncertaintyquantificationmissing,feldman2024robust,sesia2024adaptive,zhou2025conformal}. 

In particular, to ensure coverage guarantees with missing data, it is crucial to consider the missingness pattern or mask, denoted by the vector $M \in \{0,1\}^d$, where  $M_i = 1$ indicates the feature $X_i$ is missing \footnote{In this paper, for a vector $V$, we use the subscript $V_i$ to denote the $i$-th element of $V$.}.
Missing data severely degrade model performance and have long been studied in ML and statistics
\citep{emmanuel2021survey,ren2023review}. 
To evaluate and correct this problem, more recent research focuses on imputing missing data and eventually choosing the best imputation algorithms to improve the reliability of predictions, such as the impute-then-regress procedures \citep{le2021sa, bertsimas2024simple,lobo2025a}. 
It has been shown that missing values can also affect uncertainty quantification, including CP \citep{zaffran2023conformal,feldman2024robust}.

\citet{zaffran2023conformal} introduced the concept of Mask-Conditional Validity (MCV, see \Cref{def:marginal}),
emphasizing that missingness induces heterogeneity in the data, causing basic Split CP sets to exhibit under-coverage. If we denote by $P$ the joint distribution of  $(X,Y,M)$, MCV corresponds to a group-wise control adapted to each pattern $m \in \{0,1\}^d$: $P(Y \in \hat{C}_{\alpha}|M=m)\geq 1-\alpha$, which can be achieved by modifying how the calibration dataset is handled via data augmentation methods: CP-MDA-Exact and CP-MDA-Nested \citep{zaffran2023conformal}.
However, the method is limited to MCAR and tends to be conservative in the sense of requiring a large data volume for CP-MDA-Exact, or generating a large prediction set for CP-MDA-Nested in extreme cases, e.g. Figure 12 in \citet{zaffran2023conformal} for data with 40\% missing values.

As aleatoric uncertainty  
varies significantly with $M$ \citep{zaffran2023conformal}, 
MCV is the focus of our work as it is a crucial property to ensure in practice in order to restore risk-controlled and useful prediction sets. To address the shortcomings of previous methods, 
we propose a novel preimpute-mask-then-correct framework built upon the standard practice of imputing first and applying the usual uncertainty quantification tools such as Split CP.
A critical remark is that imputation (only) does not correct for the heterogeneity induced by different patterns of missingness; in order to control the miscoverage, we propose an alignment by missing pattern, illustrated in \Cref{illustration_preimpute}, before applying Split CP.
\begin{figure}[hbt]
\begin{center}
\centerline{\includegraphics[width=0.83\columnwidth]{./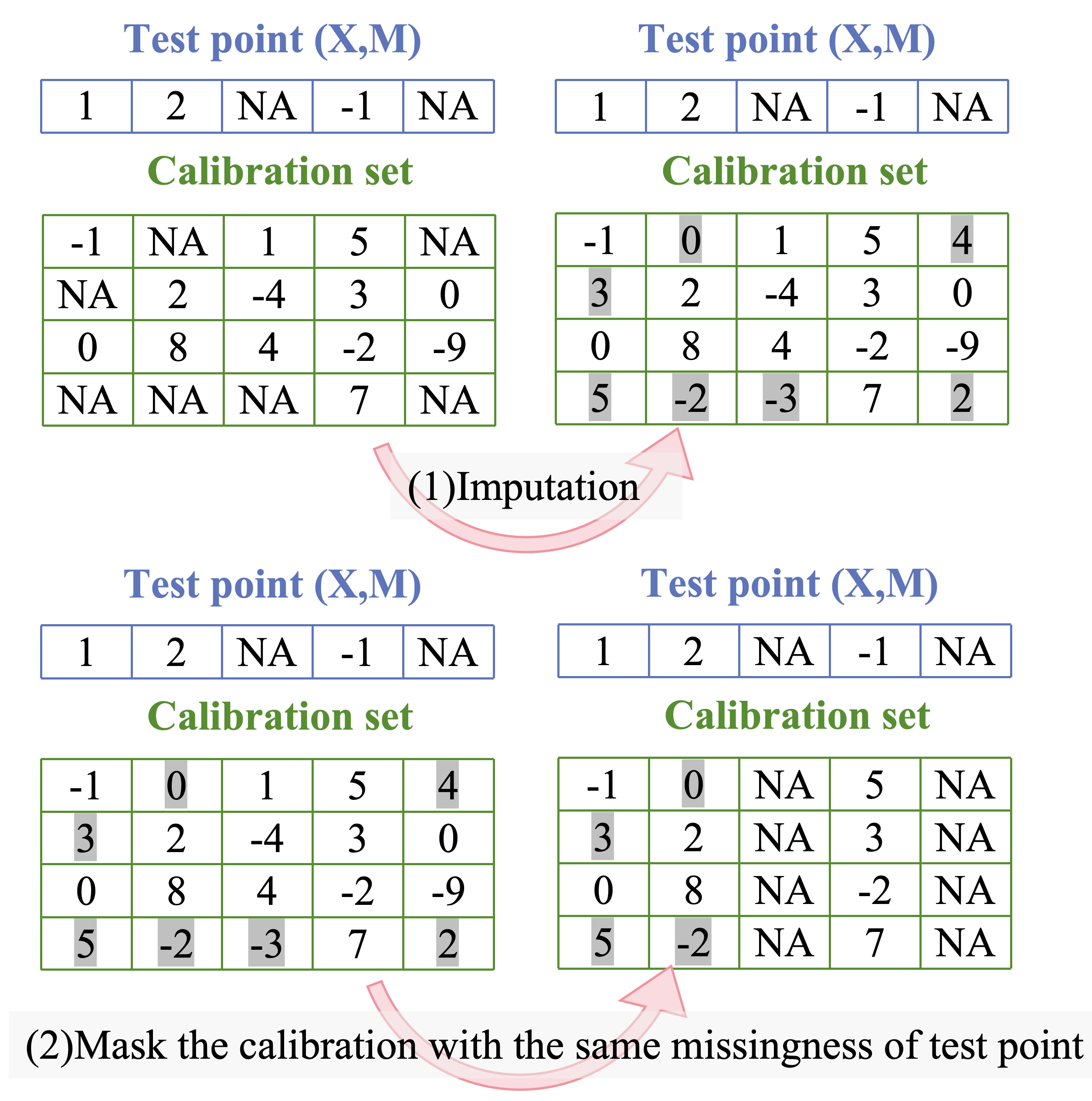}}
\captionsetup{skip=2pt}
\caption{Illustration of the pre-imputation: the calibration data is imputed and then masked by the $M$ of test point such that they share the same pattern. In this way, the split CP can give a prediction set conditional on $M$. But a distribution shift exists. Unlike CP-MDA, our method leverages all available calibration data without modifying the test point.}
\label{illustration_preimpute}
\end{center}
\end{figure}
However, as we show in the experiment in \Cref{section:weight correction}, this approach fails to guarantee MCV due to distribution shifts introduced by imperfect imputation.
We address this issue by generalizing Weighted CP \citep{tibshirani2019conformal} to missing data and introducing two likelihood-based correction mechanisms, ensuring valid intervals even under MAR and MNAR conditions, which are more realistic assumptions \citep{kang2013prevention}.

\textbf{Contributions.} 

We propose a preimpute-mask-then-correct framework with two novel correction methods, the mask-conditional Weighted CP and the Acceptance Rejection-Corrected (ARC) CP methods, to address the challenges of constructing valid prediction sets in datasets with missing covariates. 
These approaches ensure both MV and MCV while significantly reducing the width of the prediction interval compared to previous methods, as demonstrated through rigorous theoretical analysis and extensive experiments on synthetic and real-world datasets. The proposed methods guarantee MCV without any assumptions on the missingness mechanism, which, to the best of our knowledge, has not been achieved before. We also analyze the effect of inaccurate likelihood ratio estimators on MCV and establish lower bounds on the miscoverage, and empirically show that it remains limited and controlled. 
As our methodology requires only to impute once the calibration dataset, and do not require imputation of the test observation, our methodology can be integrated smoothly into a large number of data pipelines with maintained guarantees, and can be easily used in practice.

\section{Background}
\label{sec:background}
\textbf{Missing data notation.} 
Given the triplet random variable $Z:=(X,M,Y)\in\mathcal{X}\times \{0,1\}^d \times \mathcal{Y} \sim P$, let $\mathcal{X}_{\mathrm{NA}}=\prod_{j=1}^d(\mathcal{X}_j \cup \{\mathrm{NA}\})$. We define the mask operator, $\mask: \mathcal{X}_{\mathrm{NA}} \times \{0,1\}^d \rightarrow \mX$, 
\begin{equation}
    \label{eq:xna}
    \mask(x,m) \coloneqq (1 - m) \odot x \;+\; m \odot \mathrm{NA},
\end{equation}
where $\odot$ is the element-wise product. Whenever there is no ambiguity, we use $\widetilde{X}$ to denote $\mask(X, M)$ to simplify the notations.

Denote by $\obs(M)$ the set of indices of the observed covariates and by $\mis(M)$ that of the missing covariates. 
For example, if $m=[0,1,0]$, then $\obs(m)=\{1,3\}$ and $X_{\obs(m)}=[X_1,X_3]$.

Following \citet{rubin1976inference}, we recall the standard classification of  missingness mechanisms as MCAR (Missing Completely At Random) when $P(M = m \mid X,Y) = P(M = m)$, MAR (Missing At Random) when $P(M = m \mid X,Y) = P(M = m \mid X_{\obs(M)}) $ and Missing Not at Random (MNAR) when it is neither MAR nor MCAR.

To build a model on missing covariates, one may use imputation or not. Recent works have shown that for almost all deterministic imputation functions $\Phi \in \{\Phi: \mathcal{X}_{\mathrm{NA}}  \rightarrow \mathcal{X} \mid \Phi(\widetilde{X}_{\obs(M)})=X_{\obs(M)}\}$, an impute-then-regress procedure with a consistent learner is Bayes consistent \citep{le2021sa}. 
Of course, imputation is not always necessary. Tree models, such as Gradient Boosting Trees, are designed to handle covariates with missing values. 
For notational simplicity, we use $\hat{f}$ to denote any possible model defined in $\mathcal{X}_{\mathrm{NA}}$.

\textbf{Weighted Split CP.} A 
popular formulation of CP is Split CP \citep{papadopoulos2002inductive}. Given a pre-fitted model $\hat{f}$ on $t$ training data $\{(\widetilde{X}_{\text{tr}}^i,{Y}^i_{\text{tr}})\}_{i=1}^t$, and a holdout set of calibration data $\{\widetilde{Z}^i:=(\widetilde{X}^i,Y^i)\}_{i=1}^n$, a nonconformity score $s:\mathcal{X}_{\mathrm{NA}}\times \mathcal{Y} \rightarrow \mathbb{R}$ is calculated on all calibration points.
In practice, $s$ is induced by $\hat{f}$, 
which can be computed either directly on partially observed inputs or after applying an imputation strategy.
Denote by $\mathcal{Q}_\beta(F)$  the $\beta$-level quantile of 
distribution  $F$, i.e., for  $T \sim F$, 
$\mathcal{Q}_{\beta}(T)=\inf \{z: P\{T \leq z\} \geq \beta\}$. 
Let $F_n:=\{\Sigma_{i=1}^n\delta_{s(\widetilde{Z}^i)}+\delta_{+\infty}\}/(n+1)$, the empirical distribution of $n$ scores, where the additional $\delta_{+\infty}$ is a standard convention in CP to guarantee finite-sample coverage. 
The Split CP prediction set for any test point $\widetilde{X}^{n+1} \in \mathcal{X}_{\mathrm{NA}}$ is defined by
 \begin{equation}
 \label{eq:splitcp}
     \hat{C}_{\alpha}(\widetilde{X}^{n+1}):=\{y \mid 
s(\widetilde{X}^{n+1},y) \leq \mathcal{Q}_{1-\alpha}(F_n)\}.
 \end{equation}
 As long as $\{\tilde{Z}^i\}_{i=1}^{n+1}$ are exchangeable, the prediction set contains $Y^{n+1}$ with probability at least $1-\alpha$ \citep{angelopoulos2021gentle}.

However, if the points are not exchangeable, as in \Cref{illustration_preimpute}, the Split CP has no guarantee of coverage. \citet{tibshirani2019conformal} extended CP to handle non-exchangeable data under covariate shift. 
They considered reweighting the data using density ratio
between the test and calibration set. 
Other scenarios of non-exchangability include
repeated sampling \citep{sesia2023conformal}, time series \citep{gibbs2024conformal,zaffran2022adaptive} or label shift \citep{si2023pac, podkopaev2021distribution}. 
Recently, \citet{prinster2024conformal} generalize conformal prediction to arbitrary data distributions beyond exchangeability under sequential, feedback-induced covariate shifts. While they extend significantly the scope of Weighted CP, it relies on the knowledge of data generating distribution (possibly factored with graphs), which is rarely available with missing data and imputation mechanisms.

\textbf{Marginal Validity (MV) and Mask-Conditional Validity (MCV).} In the following, we assume $t+n+1$ i.i.d. realizations of the pair of random variables $(\widetilde{X}, Y)$, partitioned into a training set ${\tt Train}$ of size $t$, a calibration set of size $n$, and one test point. 
Let ${\tt Cal} = \{\widetilde{Z}^i\}_{i=1}^n$ denote the calibration set. Our task is to construct a prediction set for $Y^{n+1}$ given ${\tt Train}$, ${\tt Cal}$, and $\widetilde{X}^{n+1}$.
Facing missing values, we are particularly interested in properties in \Cref{def:marginal}.
\begin{definition}
\label{def:marginal}
A method producing 
intervals $\hat{C}_{\alpha}(\widetilde{X}^{n+1})$ is MV at level $\alpha$ if $P(Y^{n+1} \in \hat{C}_{\alpha}(\widetilde{X}^{n+1})) \geq 1-\alpha ,$ and is MCV at level $\alpha$ if, for any $m\in\{0,1\}^d$ such that $P(M=m)>0$, $P(Y^{n+1} \in \hat{C}_{\alpha}(\widetilde{X}^{n+1}) \mid M^{n+1} = m ) \geq 1-\alpha.$
\end{definition}
The problem of uncertainty quantification under missing values has recently attracted increasing attention. 
Broadly two different approaches have emerged.
First, \citet{liang2024structured, gui2023conformalized, shao2023distribution} addressed the problem through matrix completion, introducing novel CP strategies tailored to matrix data, with strong theoretical guarantees. However, their assumptions-- such as the random observation model, where entries are missing independently according to a Bernoulli distribution (i.e., MCAR)—may not hold in typical supervised learning settings. Moreover, \citet{shao2023distribution} emphasized that matrix prediction and matrix imputation are fundamentally different problems: the former assumes structure across both rows and columns (e.g., low-rank or exchangeability), while the latter deals with fixed covariates and responses, where features are  typically non-exchangeable. Second, \citet{zaffran2023conformal, zaffran2024predictiveuncertaintyquantificationmissing}, which motivated our work, 
addressed it
by introducing two data augmentation techniques (CP-MDA-Exact and CP-MDA-Nested) within the Conformal Prediction with Missing Data Augmentation (CP-MDA) framework, as well as a more flexible generalization, CP-MDA-Nested$^\star$. Their main algorithm is reminded in \Cref{alg:cp_mda_nested}.

\begin{algorithm}[hbt!]
   \caption{\textsc{CP-MDA-Nested$^\star$} \cite{zaffran2024predictiveuncertaintyquantificationmissing}}
   \label{alg:cp_mda_nested}
      \textbf{Input:} ${\tt Cal}=\{(\widetilde{X}^{k}, Y^{k})\}_{k=1}^n$, 
   nonconformity score function $s(\cdot, \cdot)$, significance level $\alpha$, test point $\widetilde{X}^{n+1}$, any subsampled set of calibration indice $\widetilde{Id}_{\text{Cal}} \subseteq \{1,...,n\}$.
   \textbf{Output:} $\widehat{C}^{\text{MDA-Nested$^\star$}}_{n, \alpha}(\widetilde{X}^{n+1})$.
    \begin{align*}
    \text{1:} &\forall k \in \widetilde{Id}_{\tt Cal}\text{, set} \bar{M}^{k} = [\max(M^k_i,M^{n+1}_i),i=1,...,d]\\
       \text{2: }&\widehat{C}_{n, \alpha}^{\text{Nested$^\star$}}(\widetilde{X}^{n+1}(M^{n+1})) :=\Big\{ y :  (1 - \alpha)(1 + \#\widetilde{Id}_{\tt Cal} )\\
       &  > \sum_{k \in \widetilde{Id}_{\tt Cal} } \mathbf{1}\Big[ s(\widetilde{X}^{k}(\bar{M}^{k}), Y^{k})< s(\widetilde{X}^{n+1}(\bar{M}^{k}), y) \Big] \Big\}
   \end{align*}
\end{algorithm}
Under MCAR, CP-MDA-Nested$^{\star}$ has been proven to be MCV at the level $2\alpha$ \citep{zaffran2024predictiveuncertaintyquantificationmissing}. 
The method is CP-MDA-Exact when $\widetilde{Id}_{\tt Cal}=\{k \in  \{1,...,n\}: \sum_{j=1}^d M^k_j = \sum_{j=1}^d M^k_j M^{n+1}_j\}$ 
and CP-MDA-Nested when $\widetilde{Id}_{\tt Cal}=\{1,...,n\}$. However, for CP-MDA-Exact, on the one hand, since only the calibration points having fewer missing values than the test point are selected, there may not be enough points for a small calibration set, which is infeasible in small-sample regimes. On the other hand, for CP-MDA-Nested, the test point is modified by augmenting the mask with additional missing values, which inevitably increases its conservativeness and generates larger prediction intervals. The CP-MDA-Nested$^\star$ with a more flexible target missingness pattern is a compromise between the two methods, but how to choose the best $\widetilde{ID}_{\tt Cal}$ remains a question.

\textbf{Relationship to Weighted CP and Shift Correction.} Beyond matrix completion and CP-MDA methods, our work is also connected to a broader research that uses importance-weighted learning to correct for distributional shifts, as in \citet{bickel2009discriminative, zhou2023domain}. While previous works have largely focused on improving importance weight estimation under covariate shift, our focus is distinct: we study how such estimators can be integrated within conformal prediction to maintain mask-conditional coverage under missing data mechanisms. 
Thus, our main novelty lies in the preimpute-mask-then-correct framework adapted to missingness and we emphasize the role of imputation and its impact on calibration validity. In addition, we provide an evaluation of the consequences of inaccurate estimators, quantifying how poor estimation affects the miscoverage of the prediction sets. 

\section{Methodology}
\subsection{Preimputation of calibration dataset}

As shown in \Cref{illustration_preimpute}, 
the preimputation ensures that Split CP can be applied. Since the response $Y$ is observed in the calibration set, it is natural to use it for imputation, in order to improve the reconstruction of missing covariates. Hence, we consider potentially more general imputation that use both $X_{\text{obs}(M)}$ and $Y$, instead of the standard ones that rely only on $X_{\text{obs}(M)}$. The latter case is covered by our results, but we find in our experiments that the general approach provides better imputations. A critical point in our work is the use of distributional (or multiple) imputation, which has been shown to provide good results and desirable properties \citep{naf2024good}, such as MICE. 

Hence, we consider only  distributional imputation strategies for the calibration data set.
Formally, starting from the partially observed data $(\widetilde{X}, Y)$, we derive a complete version $(\hat{X}, Y)$, where law of $\hat{X}\vert (\widetilde{X}, Y)$ follows a certain distribution supported in $\mathcal{X}$. Let $Q$ denote the joint distribution of $\hat{Z}:=(\hat{X},Y).$  
Applying this procedure to each calibration point, we construct an imputed calibration dataset $\{\hat{Z}^i:=(\hat{X}^i, Y^i)\}_{i=1}^n$ and an imputed test point $\hat{Z}^{n+1}:=(\hat{X}^{n+1}, Y^{n+1})$. It should be noted that, $\hat{X}^i$ being the imputed value for $\widetilde{X}^i$, we have $\hat{X}^i_{\obs(M^i)}= X^i_{\obs(M^i)}$. 
Such imputations are commonly used in practice, such as Bayesian Linear Ridge in Scikit-learn \citep{pedregosa2011scikit}.
The formal definition is in \cref{appendix:preimpute}. 
For example, if we use a Bayesian ridge regression for imputation, then $\hat{X}_{\mis(M)}|(\widetilde{X},Y)\sim \mathcal{N}\big(\hat{\mu}(\widetilde{X},Y), \hat{\Sigma}(\widetilde{X},Y)\big)$, composed with estimated conditional mean $\hat{\mu}(\widetilde{X},Y)$ and covariance matrix $\hat{\Sigma}(\widetilde{X},Y)$. 

Throughout the article, the following assumptions are made.
Here, for notational simplicity, we slightly abuse notations: for $\mathcal{I}$ a set of indices (e.g. $\mathcal{I}=\obs (m)$), ${Z}^i_{\mathcal{I}}:=({X}^i_{\mathcal{I}}, Y^i)$, $\hat{Z}^i_{\mathcal{I}}:=(\hat{X}^i_{\mathcal{I}}, Y^i)$ and $\widetilde{Z}^i_{\mathcal{I}}:=(\widetilde{X}^i_{\mathcal{I}}, Y^i)$.
\begin{assumption}
\label{assump:1}\par\noindent

\begin{enumerate}[label=(A\arabic*), itemsep=0pt, topsep=0pt]
    \item The 
    calibration and test
    data $\{\widetilde{Z}^i\}_{i=1}^{n+1}$ are i.i.d. 
    \item The imputer is trained on the training set and applied in the same way to all points; hence, the imputed dataset $\{\hat{Z}^i\}_{i=1}^{n+1}$ is also i.i.d.
    \item The predictive model is 
    fitted only on the training set.
\end{enumerate}
\end{assumption}
These assumptions are standard in the CP literature and are generally satisfied in practice.

\begin{remark}
\label{lem:iid}
Based on \Cref{assump:1}, all points imputed are i.i.d. Thus, any prediction set using Split CP on the imputed calibration and test datasets is MV, as the imputation preserves exchangeability. This result generalizes Lemma 3.2 of \citet{zaffran2023conformal}, which was limited to the deterministic imputation case.
We additionally show that ``impute-then-split CP" ensures marginal validity for distributional imputation.
\end{remark}
For any mask $m \in \{0,1\}^d$, we denote $ \mathcal{X}_{\text{obs}(m)}:=\prod_{i \in \text{obs}(m)} \mathcal{X}_i$ and  for any measurable subset $A \subseteq \mathcal{X}_{\text{obs}(m)} \times\mathcal{Y}$, we define
$Q_m(A) := Q(\hat{Z}_{\text{obs}(m)} \in A)$ , and the original conditional measure as $P_m(A) := P(Z_{\text{obs}(m)} \in A \mid M=m)$. After pre-imputation, conditionally on $M^{n+1}$, calibration points follow
$\hat{Z}^i_{\obs(M^{n+1})} \sim Q_{M^{n+1}}, \;i=1,...,n$, while the test point follows the true distribution $Z^{n+1}_{\obs(M^{n+1})} \sim P_{M^{n+1}}.$ Generally, $Q_{M^{n+1}} \neq P_{M^{n+1}}$, resulting in a mask-conditional distribution shift. This discrepancy limits the direct application of Split CP.

We introduce now \Cref{prop:bound} that gives a first insight into the impact of imputation on the mask-conditional coverage.

\begin{proposition}
\label{prop:bound}
    Under \Cref{assump:1}, if we apply Split CP to the test point $\widetilde{X}^{n+1}$, with masked calibration dataset $\{\mask(\hat{X}^i,M^{n+1}),Y^i\}_{i=1}^{n}$, then for almost every $m \in \{0,1\}^d$ such that $P(M=m) > 0$, 
    the corresponding prediction set $\hat{C}_{\alpha}$ defined in \cref{eq:splitcp} satisfies:
\begin{align*}
    &P(Y^{n+1}\in \hat{C}_{\alpha}(\widetilde{X}^{n+1})| M^{n+1}=m) \geq 1-\alpha-\\
    &\left(d_{TV}(P,Q)+\mathbb{E}_{P}\left[\left|\frac{dP(X,Y|M=m)}{dP(X,Y)}-1\right|\right]\right)
\end{align*}
In addition, if $P$ is MCAR:
$$P(Y^{n+1}\in \hat{C}_{\alpha}(X^{n+1})| M^{n+1}=m) \geq 1-\alpha-d_{TV}(P,Q)$$
\end{proposition}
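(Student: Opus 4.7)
The strategy is to reduce the problem to a covariate-shift Split CP inequality conditional on the test mask, and then bound the resulting total-variation distance by a triangle inequality combined with the data-processing inequality for TV.

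First, fix $m\in\{0,1\}^d$ with $P(M=m)>0$ and condition on $M^{n+1}=m$. By \Cref{assump:1}, the imputer is fit once on ${\tt Train}$ and applied pointwise, so the imputed calibration points $\{\hat Z^i\}_{i=1}^n\sim Q^{\otimes n}$ are i.i.d.\ and independent of the test triple $(X^{n+1},Y^{n+1},M^{n+1})$. After masking by $m$, the calibration score $s(\mask(\hat X^i,m),Y^i)$ is a measurable function of $(\hat X^i_{\obs(m)},Y^i)\sim Q_m$, while the test score $s(\widetilde X^{n+1},Y^{n+1})=s(\mask(X^{n+1},m),Y^{n+1})$ is a measurable function of $(X^{n+1}_{\obs(m)},Y^{n+1})$, whose conditional law given $M^{n+1}=m$ is $P_m$.

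Second, I would invoke the standard TV-based guarantee for Split CP under covariate shift (cf.\ \citet{tibshirani2019conformal}): when the $n$ calibration scores are i.i.d.\ from some $\mu$ and the independent test score is drawn from $\nu$, the empirical $(1-\alpha)$-quantile covers the test score with probability at least $1-\alpha-d_{TV}(\mu,\nu)$. Applying this with $\mu,\nu$ the pushforwards of $Q_m,P_m$ under $(x,y)\mapsto s(\mask(x,m),y)$, and using data-processing for TV, I obtain $P(Y^{n+1}\in\hat C_\alpha(\widetilde X^{n+1})\mid M^{n+1}=m)\ge 1-\alpha-d_{TV}(Q_m,P_m)$.

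Third, I would decompose $d_{TV}(Q_m,P_m)$ through the unconditional marginal $\tilde P_m$ of $(X_{\obs(m)},Y)$ under $P$, i.e., $d_{TV}(Q_m,P_m)\le d_{TV}(Q_m,\tilde P_m)+d_{TV}(\tilde P_m,P_m)$. Both $Q_m$ and $\tilde P_m$ are pushforwards of $Q$ and of $P_{(X,Y)}$ under the same marginalization $(x,y)\mapsto(x_{\obs(m)},y)$, so data processing yields $d_{TV}(Q_m,\tilde P_m)\le d_{TV}(P,Q)$. Likewise $\tilde P_m$ and $P_m$ are pushforwards of $P_{(X,Y)}$ and of $P_{(X,Y)\mid M=m}$ under the same map, hence $d_{TV}(\tilde P_m,P_m)\le d_{TV}(P_{(X,Y)},P_{(X,Y)\mid M=m})=\tfrac12\mathbb{E}_P\bigl[\,|dP(X,Y\mid M=m)/dP(X,Y)-1|\,\bigr]$. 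Absorbing the $1/2$ into the looser bound of the statement gives the first inequality; under MCAR, $M\ind(X,Y)$ forces the Radon--Nikodym derivative to be identically $1$, killing the second term and yielding the MCAR bound.

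The main subtlety is the conditional invocation of the covariate-shift CP lemma: one must verify that conditioning on $M^{n+1}=m$ preserves the i.i.d.\ structure of the $n$ calibration scores and their independence from the test score. This rests on (A1)--(A2) together with the fact that the imputer acts pointwise after being fit solely on ${\tt Train}$, so that $\{\hat Z^i\}_{i=1}^n$ stays i.i.d.\ from $Q$ regardless of the value of $M^{n+1}$.
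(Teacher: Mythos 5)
Your proof is correct and follows essentially the same route as the paper's: a mask-conditional coverage bound of the form $1-\alpha-c\, d_{TV}(Q_m,P_m)$, followed by a triangle inequality through the unconditional law of $(X_{\obs(m)},Y)$ and a marginalization step, exactly mirroring the paper's explicit integral computation. The only differences are cosmetic: the paper invokes Theorem 2 of \citet{barber2023conformal} (with constant $2$) where you use a direct TV/coupling argument (with constant $1$, so your bound is in fact slightly tighter), and you carry out the marginalization abstractly via data processing rather than by integrating over $x_{\mis(m)}$.
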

See \Cref{proof:prop:bound} for the proof. The bound decomposes the mask-conditional miscoverage into two components: the error due to imperfect imputation and the intrinsic distribution shift caused by mask-dependent sampling. This first result indicates that under MAR or MNAR conditions, when $P(X, Y \mid M=M^{n+1}) \ne P(X, Y)$, valid coverage may not be guaranteed even when the imputation is perfect, i.e. $P=Q$.
The second result highlights the importance of the quality of pre-imputation in ensuring valid coverage under MCAR. Even if our lower bounds may be loose, our results contradict apparently the results of \citet{le2021sa}, which focus on consistency and asymptotic Bayes-optimality, stating that ``imputation doesn't matter for regression task". While these previous results are first-order, our method highlights the significant impact of pre-imputation on second-order properties, such as uncertainty and MCV. A detailed discussion on pre-imputation quality is provided in Appendix~\ref{appendix:preimpute}.

\subsection{Weighted conformal prediction under mask-conditional distribution shift}

To solve the distribution shift caused by imputation, we introduce \Cref{assump:absolute_continuous}. 
    \begin{assumption}
\label{assump:absolute_continuous}
    The joint distribution of $(X,Y) \sim P$ is absolutely continuous with respect to $(\hat{X},Y) \sim Q$, and we denote $\omega(x,y) := \frac{dP}{dQ}(x,y)$.
\end{assumption}
This assumption is satisfied in many common scenarios. For instance, if the missing covariates are continuously distributed and we use a distributional imputation method (e.g., MICE), then the imputed distribution is absolutely continuous with respect to the original one. If the missing covariates are discrete or contain mass points, the condition still holds as long as the imputation assigns nonzero probability mass to those same points. 
In practice, this property can be easily ensured, for example, by using distributional imputers or simply by adding a small Gaussian perturbation to the imputed values, which guarantees that the imputed distribution covers the full support of the original one.

Under \Cref{assump:absolute_continuous}, for any mask $m \in \{0,1\}^d$ such that $P(M=m)>0$, the conditional distribution $P_m$ is absolutely continuous w.r.t. $Q_m$ with likelihood ratio $\omega_m(x_{\text{obs}(m)},y) := \frac{dP_m}{dQ_m}(x_{\text{obs}(m)},y)$, with convention $\omega_m=0$ if $Q_m=0$. 
Denote for all $i \in \{1,...,n\}$, $\omega_m^i:=\omega_m(\hat{X}^i _{\text{obs}(m)},Y^i)$, and define the weights functions for  all $ i \in \{1,2,...,n\}$, $W^{i}_m:\mathcal{X}_{\obs(m)} \times \mathcal{Y} \;\rightarrow\; \mathbb{R}$:
\[
W^{i}_m(x_{\text{obs}(m)},y):=\frac{\omega_{m}^{i}}{\sum_{j=1}^{n} \omega^j_m+\omega_m(x_{\text{obs}(m)},y)},
\]
\[
\text{and } W^{n+1}_m(x_{\text{obs}(m)},y):=\frac{\omega_m(x_{\text{obs}(m)},y)}{\sum_{j=1}^{n} \omega^j_m+\omega_m(x_{\text{obs}(m)},y)}.
\]
For any nonconformity score $s:\mathcal{X}_{\mathrm{NA}} \times \mathcal{Y} \rightarrow \mathbb{R}$, define on the same domain the score for imputed-then-masked calibration points: 
$S^{i}_{m}:=s(\text{mask}(\hat{X}^i,m),Y^i))$. 
\begin{theorem}
\label{thm:weighted_cp_mcv}
    Under \Cref{assump:1,assump:absolute_continuous}, 
define the weighted split CP interval conditional on $M^{n+1}=m$:
\begin{align}
&\widehat{C}_{\alpha}^W(\widetilde{X}^{n+1})=\{y : s(\widetilde{X}^{n+1},y) \label{eq:weighted_prediction_set} \leq \\
&\mathcal{Q}_{1-\alpha}(\sum_{i=1}^{n} W^{i}_m(\widetilde{X}^{n+1}_{\text{obs}(m)},y) \delta_{S^{i}_{m}}
+W_m^{n+1}(\widetilde{X}^{n+1}_{\text{obs}(m)},y) \delta_{+\infty})
\} \nonumber
\end{align}
Then for any missingness mechanism (MCAR, MAR, or MNAR):
$P(Y^{n+1} \in \widehat{C}_{\alpha}^W(\widetilde{X}^{n+1})|M^{n+1}=m) \geq 1-\alpha .$
\end{theorem}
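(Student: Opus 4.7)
The plan is to reduce the statement to a direct application of the weighted exchangeability framework of \citet{tibshirani2019conformal}, conditional on the test pattern $M^{n+1}=m$. First, I would condition on the event $\{M^{n+1}=m\}$ and set
\[
V^i \;:=\; \hat{Z}^i_{\obs(m)}\ \text{for }i=1,\dots,n,\qquad V^{n+1}\;:=\;Z^{n+1}_{\obs(m)},
\]
so that the masked score for a calibration point is a function only of $V^i$: indeed $S^i_m = s(\mask(\hat{X}^i,m),Y^i)$ depends on $\hat{Z}^i$ only through its $\obs(m)$-coordinates, and the test score $s(\widetilde{X}^{n+1},Y^{n+1})$ on the event $\{M^{n+1}=m\}$ depends on $Z^{n+1}$ only through $V^{n+1}$. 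Using \Cref{assump:1}, the imputed calibration tuples $\{\hat{Z}^i\}_{i=1}^n$ are i.i.d.\ from $Q$ and independent of $M^{n+1}$, so conditional on $M^{n+1}=m$ the variables $V^1,\dots,V^n$ are i.i.d.\ from $Q_m$, while $V^{n+1}\sim P_m$ by definition of $P_m$.

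Second, I would invoke \Cref{assump:absolute_continuous} to obtain $P_m \ll Q_m$ with Radon--Nikodym derivative $\omega_m$, so the tuple $(V^1,\dots,V^n,V^{n+1})$ is weighted exchangeable in the sense of \citet{tibshirani2019conformal}: the joint density factorises as a product of marginals reweighted by $\omega_m(V^{n+1})$. The normalised weights $W^i_m$ and $W^{n+1}_m$ defined in the theorem are precisely the quantities given by the weighted-exchangeability construction applied to the scores $S^i_m$ and to the test-score candidate $s(\widetilde{X}^{n+1},y)$ at $y=Y^{n+1}$.

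Third, I would apply the weighted-quantile lemma (Lemma 3 of \citet{tibshirani2019conformal}, or its proof via the pivot argument): for the random element $V^{n+1}$ distributed as $P_m$ and the atomic weighted empirical measure
\[
\sum_{i=1}^{n} W^{i}_m(V^{n+1})\,\delta_{S^{i}_{m}}\;+\;W^{n+1}_m(V^{n+1})\,\delta_{s(\widetilde{X}^{n+1},Y^{n+1})},
\]
the true test score $s(\widetilde{X}^{n+1},Y^{n+1})$ lies at or below its $(1-\alpha)$-quantile with probability at least $1-\alpha$. Replacing the atom at $s(\widetilde{X}^{n+1},Y^{n+1})$ by the deterministic upper bound $+\infty$ can only enlarge the quantile, which preserves the coverage guarantee and yields the set $\widehat{C}^W_\alpha(\widetilde{X}^{n+1})$ from \eqref{eq:weighted_prediction_set}. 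All of this takes place under the conditioning $M^{n+1}=m$, and is valid for every $m$ with $P(M=m)>0$, giving the stated MCV under any missingness mechanism (since no assumption on the law of $M$ has been used, only that $P_m\ll Q_m$).

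The main obstacle is bookkeeping rather than conceptual. Two subtleties deserve care: (i) ensuring that the score $s(\widetilde{X}^{n+1},y)$ evaluated at $y=Y^{n+1}$ really equals the weighted pivot studied in the Tibshirani construction, which requires that the weights $W^i_m$ depend on the test point only through $(\widetilde{X}^{n+1}_{\obs(m)},y)$, exactly as written in the theorem; and (ii) handling the convention $\omega_m=0$ on $\{Q_m=0\}$ so that the weighted quantile is well defined on a $P_m$-null set without affecting the coverage probability. Both points are routine once the reduction to $V^1,\dots,V^{n+1}$ above has been made.
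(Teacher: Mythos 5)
Your proposal is correct and follows essentially the same route as the paper: condition on $M^{n+1}=m$, observe that the imputed calibration points (independent of $M^{n+1}$) follow $Q_m$ while the test point follows $P_m$, and invoke the weighted-exchangeability results of \citet{tibshirani2019conformal} with weights $1,\dots,1,\omega_m$ to get conditional coverage, for every $m$ with $P(M=m)>0$ and hence under any missingness mechanism. The only cosmetic difference is that the paper lifts $P_m,Q_m$ to $d$-dimensional laws $P_m^\star,Q_m^\star$ supported on $\{x_{\mis(m)}=\mathrm{NA}\}$ before applying Tibshirani's Lemma~2 and Theorem~2, whereas you work directly on the $\obs(m)$-coordinates and argue the quantile/pivot step explicitly; these are equivalent.
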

See \Cref{proof:thm:weighted_cp_mcv} for proof. The key insight of \Cref{thm:weighted_cp_mcv} is that when imputation introduces a mask-conditional distribution shift, 
the likelihood ratio $\omega_m$ adjusts the quantile and ensures valid coverage. Unlike \citet{tibshirani2019conformal} which addresses only covariate shifts with explicit prediction intervals, our method requires $y$-dependent weights for each calibration point in \Cref{eq:weighted_prediction_set}. The formal \Cref{alg:weighted_cif} is presented in \Cref{appendix:proof} and for implementation and runtime considerations, we discuss our practical strategies in Appendix~\ref{appendix:weighted_cp_implementation}. The result and algorithm of \Cref{thm:weighted_cp_mcv} still apply if  $\omega_m(x_{\text{obs}(m)},y)$ are known modulo a multiplicative constant. Nevertheless, it is still computationally expensive and  we propose below a more efficient algorithm.
\subsection{ARC conformal prediction under mask-conditional distribution shift}
Generally, one would expect the pre-imputation to have at least some ability to reconstruct the original distribution, such that the gap between imputed data (proposal distribution $Q_m$) and complete distribution (target distribution $P_m$) can be reduced. Therefore, this is a well-suited scenario for drawing observations from $Q_m$ by using acceptance-rejection (AR) to target $P_m$. Suppose $\omega^i_m$ is bounded by a constant $K > 0$. The idea of AR is to draw a subset from $\{\hat{Z}^i\}_{i=1}^n$ uniformly at random. Specifically, for each point, we sample $U^{i} \sim U(0, 1)$ and retain the point if and only if:
$U^{i}<{\omega^i_m(\hat{X}^i_{\text{obs}(m)},Y^i)}/{K}$.
The formal \Cref{alg:ar_cif} is presented in Appendix~\ref{appendix:proof}.
\Cref{alg:ar_cif} outputs accepted $\{(\hat{X}^j,{Y}^j)\}_{j\in \mathcal{I}_{\tt AR}}$, that we mask with $m$ to create a new calibration dataset $\widetilde{\tt Cal}_m=\{(\text{mask}(\hat{X}^j,m),{Y}^j)\}_{j\in \mathcal{I}_{\tt AR}}$. \Cref{thm:ar_correction} shows that 
$\widetilde{\tt Cal}_{M^{n+1}}$ has distribution $P_{M^{n+1}}$ such that we can apply standard split CP calibration to get an MCV prediction set on test point $\widetilde{X}^{n+1}$.
\begin{theorem}
\label{thm:ar_correction} For any $m \in \{0,1\}^d$, suppose $\omega^i_m$ is bounded by a constant $K > 0$. 
    We generate $\widetilde{\tt Cal}_m$ with index $\mathcal{I}_{\tt AR}$ selected by \Cref{alg:ar_cif}. Then: 
      (a) The points in $\widetilde{\tt Cal}_m$ are i.i.d. samples from $P_{m}$; 
     (b) The split prediction set $\widehat{C}^{\tt AR}_{\alpha}$ (see \Cref{alg:ar_cif}) satisfies 
    \[
    P(Y^{n+1} \in \widehat{C}^{\tt AR}_{\alpha}(\widetilde{X}^{n+1}) \mid M^{n+1}=m) \geq 1-\alpha.
    \]
\end{theorem}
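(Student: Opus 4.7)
The proof is structured around the classical accept/reject identity applied to the imputed-then-masked calibration sample, followed by standard split CP on the accepted pool.

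\emph{Part (a).} By \Cref{assump:1} the imputed points $\{\hat Z^i\}_{i=1}^n$ are i.i.d.\ from $Q$, and the uniforms $\{U^i\}$ used in \Cref{alg:ar_cif} are independent of $\{\hat Z^i\}$ and of $M^{n+1}$. Fix $m$ with $P(M=m)>0$ and condition on $M^{n+1}=m$; this event is independent of $\hat Z^i$ for $i\le n$, so the marginal law of $\hat Z^i_{\obs(m)}$ remains $Q_m$. For any measurable $A\subseteq\mathcal{X}_{\obs(m)}\times\mathcal{Y}$ I compute, using independence of $U^i$ and $\hat Z^i$ and the boundedness hypothesis $\omega_m\le K$,
\[
P\bigl(i\in\mathcal{I}_{\tt AR},\,\hat Z^i_{\obs(m)}\in A\mid M^{n+1}=m\bigr)=\int_A \frac{\omega_m(z)}{K}\,dQ_m(z)=\frac{P_m(A)}{K}.
\]
Taking $A=\mathcal{X}_{\obs(m)}\times\mathcal{Y}$ gives acceptance probability $1/K$, so the conditional law of $\hat Z^i_{\obs(m)}$ given acceptance is $P_m$. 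Because the triples $(\hat Z^i,U^i)$ are i.i.d.\ across $i$, the accepted sub-vectors are i.i.d.\ from $P_m$, proving (a). The masking $\hat X^i\mapsto\text{mask}(\hat X^i,m)$ is a deterministic measurable function of $\hat X^i_{\obs(m)}$, so the masked pairs in $\widetilde{\tt Cal}_m$ inherit the i.i.d.-from-$P_m$ property (on the push-forward space).

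\emph{Part (b).} The test observation satisfies $(X^{n+1}_{\obs(m)},Y^{n+1})\mid M^{n+1}=m\sim P_m$ by the definition of $P_m$, and since $\widetilde X^{n+1}=\text{mask}(X^{n+1},m)$ under this event, the masked pair $(\widetilde X^{n+1},Y^{n+1})$ has the same law as the masked accepted pairs from (a). Therefore, conditional on $M^{n+1}=m$ and on the (random) accepted index set $\mathcal{I}_{\tt AR}$, the scores $\{S^j_m\}_{j\in\mathcal{I}_{\tt AR}}$ together with the test score $s(\widetilde X^{n+1},Y^{n+1})$ are i.i.d., hence exchangeable. The standard split CP quantile lemma then yields
\[
P\bigl(Y^{n+1}\in\widehat C^{\tt AR}_\alpha(\widetilde X^{n+1})\mid M^{n+1}=m,\mathcal{I}_{\tt AR}\bigr)\ge 1-\alpha,
\]
where the conventional $+\infty$ atom in the empirical score distribution handles the trivial case $\mathcal{I}_{\tt AR}=\emptyset$ (for which the set is $\mathcal{Y}$). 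Marginalising over $\mathcal{I}_{\tt AR}$ gives the stated mask-conditional coverage.

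\emph{Main obstacle.} The delicate point is the conditioning bookkeeping: one must verify that (i) conditioning on $M^{n+1}=m$ does not disturb the law of the calibration points or of the auxiliary uniforms, and (ii) the acceptance event for index $i$ depends on $(\hat Z^i,U^i)$ alone, so independence across accepted indices survives. Once these are established, the AR identity and the exchangeability argument for split CP are routine. The boundedness hypothesis $\omega_m\le K$ is only needed to make the acceptance probability $\omega_m(\hat Z^i_{\obs(m)})/K$ a valid probability; it does not otherwise affect the argument.
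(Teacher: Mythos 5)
Your proposal is correct and follows essentially the same route as the paper: establish via the standard acceptance--rejection identity that the accepted, masked calibration points are i.i.d.\ from $P_m$ (the paper does this through a CDF/Bayes computation, you through the joint-measure identity, which also absorbs the paper's unknown normalization constant), and then invoke split-CP exchangeability of these points with the test point conditional on $M^{n+1}=m$. Your explicit conditioning on $\mathcal{I}_{\tt AR}$ and the remark on the empty-acceptance case are fine refinements of the same argument, not a different approach.
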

See \Cref{proof:thm:ar_correction} for proof and the detailed ARC-CP algorithm. Theoretically, if \Cref{assump:absolute_continuous} is violated, the likelihood ratio may not be well-defined or bounded, compromising our methods' validity. This situation occurs when certain observable points cannot be generated by imputation (see Appendix~\ref{appendix:preimpute} for discussion), highlighting the importance of dependable imputations
and the benefit of having close distributions (with the same support) to ensure a high acceptance rate.
\subsection{Estimation of likelihood ratio $\omega_m$}
Similar to \citet{tibshirani2019conformal}, we propose the following \Cref{alg:likelihood_ratio_estimation} for training an estimator for the likelihood ratio $\omega_m$. We provide here a sketch of the proof for its validity. The idea is to label the randomly generated data $\tilde{x}$ ``1''  if it does not contain any imputed value (i.e. $(X_{\mathrm{obs}(m)}, Y) \sim P_m$) or ``0'' if it does (i.e. $(X_{\obs(m)}, Y) \sim Q_m$). We train a binary classifier,  
such that we estimate the probability $P(C=1 \mid \widetilde{X}=\tilde{x}, Y=y)$ with $\hat{p}(\tilde{x}, y)$. By Bayes' theorem, we get $\frac{P(C=1 \mid \widetilde{X}=\tilde{x}, Y=y)}{P(C=0 \mid \widetilde{X}=\tilde{x}, Y=y)} = \beta ({d P_m}/{d Q_m})(x_{\mathrm{obs}(m)}, y)$, with the constant $\beta = {P(C=1)}/{P(C=0)}$. The estimation of $\omega_m$ uses solely the calibration dataset, and does not depend on the test value $(x,y)$. 

\begin{algorithm}[htb!]
\caption{{Estimation of Likelihood Ratio}}
\label{alg:likelihood_ratio_estimation}
\textbf{Input:} Points $\{(x^i, m^i, y^i)\}_{i=1}^t$ sampled from imputed training set $(\hat{X}, M, Y)$; number of random masks $q \in \mathbb{N^{\star}}$. \textbf{Output:} Estimator of $\omega_m(x_{\obs(m)}, y)$.
\begin{algorithmic}[1]
\STATE For each $(x^i, m^i, y^i)$, generate $q$ random masks $\{\check{m}_j^i\}_{j=1}^q$.
\STATE Compute $c_j^i = \ind \{m^i = \check{m}^{i}_{j}\}$ for all $i,j$, and construct dataset $\{(x^i, \check{m}_j^i, y^i, c_{j}^i)\}$.
\STATE Train a probabilistic classifier $\hat{p}$ (e.g., histogram-based GBDT) to predict $c_j^i$ from $(\mask(\tilde{x}^i{,\check{m}_j^i)}, y^i)$.
\STATE $\hat{\omega}_m:={\hat{p}(\tilde{x}(m), y)}/({1 - \hat{p}}(\tilde{x}(m), y))$ estimates $\omega_m$.
\end{algorithmic}
\end{algorithm}

\subsection{Impact of ratio estimation on coverage}
The accuracy of the estimator may impact the coverage of Weighted CP. 
This can be the case when it is difficult to estimate, for instance in high dimensions. 
Therefore, we provide below a theoretical analysis of the potential miscoverage, and complement it with an empirical study in \Cref{appendix:estimator quality} for completeness.

Let $\hat{\omega} _m(\cdot)$ be our estimator of the density ratio and we define $\tilde{\omega} _m(\cdot):={\hat{\omega} _m}(\cdot)/{\mathbb{E}_Q[\hat{\omega} _m(\hat{X}_{\obs(m)},Y)]}$. 
     If we denote $d\tilde{P}_m:=\tilde{\omega}_m(\hat{X}_{\obs (m)},Y)\cdot dQ_m$, then, under Assumption \ref{assump:absolute_continuous}, $\tilde{P}_m$ is a well defined probability measure.
\begin{proposition}
\label{prop:imperfect_densityratio}
     Under Assumption \ref{assump:absolute_continuous}, 
     for any mask $m$, if we use the estimated likelihood ratio $\hat{\omega}$ in (\ref{eq:weighted_prediction_set}), denoted by $\widehat{C}_\alpha^{\widehat W}$, then we have 
     \begin{align}
          &P(Y^{n+1} \in \widehat{C} _{\alpha}^{\widehat W}(\widetilde{X}^{n+1} )\mid M^{n+1}=m)  \\
          & \geq(1-\alpha)  -d_{TV}(\tilde{P}_m,P_m). \notag
     \end{align}
\end{proposition}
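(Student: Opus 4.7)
The plan is to reduce the proof to an application of \Cref{thm:weighted_cp_mcv} with a carefully chosen target distribution, followed by a standard total-variation comparison. The key observation is that $\tilde{P}_m$ is constructed precisely so that $\tilde{\omega}_m = d\tilde{P}_m/dQ_m$ is the \emph{exact} likelihood ratio of $\tilde{P}_m$ with respect to the imputed distribution $Q_m$. Thus, from the point of view of the weighted CP procedure, the estimator $\hat{\omega}_m$ behaves like the true density ratio of a different target: $\tilde{P}_m$ instead of $P_m$.

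First I would verify that $\tilde{P}_m$ is a well-defined probability measure under \Cref{assump:absolute_continuous}: by construction $\int \tilde{\omega}_m\, dQ_m = \mathbb{E}_Q[\hat{\omega}_m]/\mathbb{E}_Q[\hat{\omega}_m] = 1$, and $\tilde{\omega}_m \geq 0$. Next, I would observe that the weights $W_m^i$ entering \cref{eq:weighted_prediction_set} are \emph{self-normalized}, so they are invariant under multiplication of the density ratio by a positive constant. Consequently, the prediction set $\widehat{C}_\alpha^{\widehat W}$ built with $\hat{\omega}_m$ is numerically identical to the set built with $\tilde{\omega}_m$.

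Now I would invoke \Cref{thm:weighted_cp_mcv} in the hypothetical scenario where the test point is drawn from $\tilde{P}_m$ (conditional on $M^{n+1}=m$), the calibration points are still drawn from $Q_m$ (as produced by pre-imputation), and the weights are built from the \emph{exact} ratio $\tilde{\omega}_m = d\tilde{P}_m/dQ_m$. The assumptions of the theorem hold verbatim with $\tilde{P}_m$ playing the role of $P_m$, so
\begin{equation*}
\tilde{P}_m\bigl(Y^{n+1} \in \widehat{C}_\alpha^{\widehat W}(\widetilde{X}^{n+1}) \mid M^{n+1}=m\bigr) \;\geq\; 1-\alpha.
\end{equation*}
Conditioning on the calibration set, the prediction set is a fixed measurable set that depends only on the test point; hence applying the elementary inequality $|P_m(A) - \tilde{P}_m(A)| \leq d_{TV}(P_m,\tilde{P}_m)$ pointwise in the calibration realizations and then integrating out the calibration randomness yields the claimed lower bound.

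The main technical subtlety, rather than any deep obstacle, is justifying that $\hat{\omega}_m$ and $\tilde{\omega}_m$ produce identical weighted sets despite the unknown normalizing constant $\mathbb{E}_Q[\hat{\omega}_m]$; this follows immediately from the self-normalization structure. A secondary point to handle cleanly is that the TV comparison must be applied conditional on the calibration data (so that the prediction set is deterministic in the bound) before taking the expectation, which is straightforward but must be stated carefully to avoid conflating the randomness in cal and test.
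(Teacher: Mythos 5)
Your proposal is correct and follows essentially the same route as the paper's own proof: reinterpret $\tilde{\omega}_m$ as the exact density ratio $d\tilde{P}_m/dQ_m$, apply the weighted-CP validity result to a hypothetical test point drawn from $\tilde{P}_m$, and transfer the guarantee to $P_m$ via the total-variation inequality. Your explicit remarks on self-normalization and on conditioning on the calibration data before applying the TV bound are points the paper treats more tersely, but they do not change the argument.
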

To gauge the loss, we propose an implementable lower bound for the coverage below.
\begin{proposition}
\label{prop:estimablebound}
Suppose we have an extra dataset $\{(\check{X}^k,\check{Y}^k,\check{M}^k)\}_{k=1}^{J}$, with i.i.d. points from $P$. Using \Cref{alg:generate_P}, generate two datasets $\{Z^i_P\}_{i=1}^{l_1} \sim P_m$ and $\{Z_{\tilde{P}}^i\}_{i=1}^{l_2} \sim \tilde{P}_m$, and set $l=\min(l_1,l_2)$. Form the balanced classification sample by adding a label $C$, $Z_c := \{(Z^i_{P},1)\}_{i=1}^{l} \cup \{(Z^i_{\tilde P},0)\}_{i=1}^{l}$.
Let $\hat g_m\in\arg\min_{g\in\mathcal G_m}\hat R_\gamma(g_m)$ be an empirical risk minimization over $\mathcal G_m$
with respect to the $0$–$1$ loss $\gamma$ on $Z_c$ defined on Appendix~\ref{sec:regularity}, and denote by
$\hat R_\gamma(\cdot)$ the empirical risk on $Z_c$.
Under regularity \Cref{ass:ERM01,ass:M,ass:realizable,ass:stability,ass:VC}, for any $\epsilon>0$, we have with probability at least $(1-\exp{(-\frac{\epsilon^2}{2l\beta_l^2})}-\exp{(-\frac{\epsilon^2 l}{2})})$ that:
\begin{align}
    &P(Y^{n+1}\in \hat{C}^{\widetilde W}_{\alpha}(\tilde{X}^{n+1})\mid  M^{n+1}=m) \notag \\
    > &1-\alpha-(1-2\, \hat{R}_\gamma (\hat{g}_m))-\epsilon-K\sqrt{\frac{V_{\mathcal{G}_m}}{l}},
\end{align}
with $\beta_l,V_{\mathcal{G}},K$ defined in Appendix~\ref{sec:regularity}. 
\end{proposition}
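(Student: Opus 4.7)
The plan is to reduce the claim to \Cref{prop:imperfect_densityratio} and then bound the total variation $d_{TV}(\tilde{P}_m,P_m)$ by the empirical Bayes classification error between $P_m$ and $\tilde{P}_m$. Since the two weightings $\hat{\omega}_m$ and $\tilde{\omega}_m$ produce identical self-normalised weights in \eqref{eq:weighted_prediction_set}, \Cref{prop:imperfect_densityratio} applied with $\tilde{\omega}_m$ yields
\[
P\bigl(Y^{n+1}\in \hat{C}^{\widetilde W}_\alpha(\widetilde{X}^{n+1}) \mid M^{n+1}=m\bigr) \;\geq\; 1-\alpha - d_{TV}(\tilde{P}_m,P_m),
\]
so it suffices to show that, on the stated high-probability event,
\[
d_{TV}(\tilde{P}_m,P_m) \;\leq\; \bigl(1-2\,\hat{R}_\gamma(\hat{g}_m)\bigr) + \epsilon + K\sqrt{V_{\mathcal{G}_m}/l}.
\]

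The central identity I would invoke is the classical correspondence between total variation and binary Bayes risk: for balanced priors $\pi_0=\pi_1=1/2$ and class-conditional distributions $P_m$ (labelled $C=1$) and $\tilde{P}_m$ (labelled $C=0$), the Bayes risk under the $0$--$1$ loss satisfies $R^{\star}_m = \tfrac12(1-d_{TV}(P_m,\tilde{P}_m))$, so $d_{TV}(\tilde{P}_m,P_m) = 1 - 2R^{\star}_m$. This matches exactly the balanced sample $Z_c$ constructed in the statement, and the task therefore reduces to lower-bounding $R^{\star}_m$ by $\hat{R}_\gamma(\hat{g}_m) - \epsilon - K\sqrt{V_{\mathcal{G}_m}/l}$.

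Next, I would combine three learning-theoretic ingredients. First, the stability \Cref{ass:stability} with bounded-differences parameter $\beta_l$ lets me apply McDiarmid's inequality to $R_\gamma(\hat{g}_m) - \hat{R}_\gamma(\hat{g}_m)$, giving $|R_\gamma(\hat{g}_m) - \hat{R}_\gamma(\hat{g}_m)| \leq \epsilon/2$ with probability at least $1 - \exp(-\epsilon^2/(2l\beta_l^2))$, which produces the first exponential failure probability. Second, the realizability \Cref{ass:realizable} provides a Bayes-optimal $g^{\star}\in\mathcal{G}_m$ with $R_\gamma(g^{\star}) = R^{\star}_m$, so ERM optimality yields $\hat{R}_\gamma(\hat{g}_m) \leq \hat{R}_\gamma(g^{\star})$, and Hoeffding's inequality applied to the $l$ i.i.d., $[0,1]$-valued losses at the fixed classifier $g^{\star}$ gives $\hat{R}_\gamma(g^{\star}) - R^{\star}_m \leq \epsilon/2$ with probability at least $1 - \exp(-\epsilon^2 l/2)$, producing the second exponential term. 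Third, the Vapnik--Chervonenkis \Cref{ass:VC} with dimension $V_{\mathcal{G}_m}$ controls the excess risk $R_\gamma(\hat{g}_m) - R^{\star}_m \leq K\sqrt{V_{\mathcal{G}_m}/l}$ through the standard uniform convergence argument (symmetrisation together with Sauer's lemma or Dudley chaining). Chaining the three inequalities gives $\hat{R}_\gamma(\hat{g}_m) \leq R^{\star}_m + \epsilon + K\sqrt{V_{\mathcal{G}_m}/l}$ on the intersection of the two concentration events, a union bound yields the claimed joint probability $1 - \exp(-\epsilon^2/(2l\beta_l^2)) - \exp(-\epsilon^2 l/2)$, and substituting $d_{TV}(\tilde{P}_m,P_m) \leq 1 - 2\hat{R}_\gamma(\hat{g}_m) + \epsilon + K\sqrt{V_{\mathcal{G}_m}/l}$ into \Cref{prop:imperfect_densityratio} completes the proof.

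The main obstacle I anticipate is reconciling the three heterogeneous error sources --- the data-dependence of $\hat{g}_m$ (handled via stability plus McDiarmid), the fluctuation of the empirical risk at the fixed $g^{\star}$ (handled via Hoeffding), and the bias $R_\gamma(\hat{g}_m) - R^{\star}_m$ (handled via VC plus realizability) --- so that the numerical constants end up matching exactly the form in the statement, without sample-splitting or extraneous absolute constants. In particular, whether \Cref{ass:stability} is phrased relative to $R_\gamma(\hat{g}_m)$ or to $\hat{R}_\gamma(\hat{g}_m)$ determines precisely which quantity McDiarmid acts on and how $\beta_l$ enters the exponent. A secondary subtlety is to verify that \Cref{alg:generate_P} yields exact i.i.d.\ samples from $\tilde{P}_m$, so that the induced binary problem on $Z_c$ really has $(P_m,\tilde{P}_m)$ as its class-conditionals; this should parallel the acceptance--rejection step of \Cref{alg:ar_cif}, but driven by the normalised weights $\tilde{\omega}_m$ rather than the true $\omega_m$.
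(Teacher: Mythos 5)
Your proposal is correct and follows the same backbone as the paper's own proof in Appendix~\ref{proof:prop3}: reduce to \Cref{prop:imperfect_densityratio} (legitimate, since the self-normalised weights in \eqref{eq:weighted_prediction_set} are invariant to the constant $\mathbb{E}_Q[\hat{\omega}_m]$), convert $d_{TV}(P_m,\tilde P_m)$ into the balanced Bayes risk via \Cref{lemma:bayes_tv_equivalence}, and then show $\hat R_\gamma(\hat g_m)\lesssim R_\gamma^\star+\epsilon+K\sqrt{V_{\mathcal G_m}/l}$ on the intersection of two concentration events whose union bound gives the stated probability. The difference lies only in how that middle inequality is assembled: the paper passes through $\mathbb{E}_{Z_c}[R_\gamma(\hat g_m)]$, using the Massart-type expected excess-risk bound for the $K\sqrt{V_{\mathcal G_m}/l}$ term, McDiarmid with $\beta_l$-stability to concentrate $R_\gamma(\hat g_m)$ around that expectation, and Hoeffding for $\hat R_\gamma(\hat g_m)-R_\gamma(\hat g_m)$; you instead go through the realizable $g_m^\star$, using ERM optimality plus Hoeffding at a fixed classifier to get $\hat R_\gamma(\hat g_m)\le \hat R_\gamma(g_m^\star)\le R_\gamma^\star+\epsilon/2$, which is actually the cleaner half of the argument (the paper's Hoeffding step is applied to the data-dependent $\hat g_m$, yours to a fixed hypothesis). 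Two minor caveats: McDiarmid applied to the gap $R_\gamma(\hat g_m)-\hat R_\gamma(\hat g_m)$ only controls its deviation from its expectation, so concluding $|R_\gamma(\hat g_m)-\hat R_\gamma(\hat g_m)|\le\epsilon/2$ would additionally require the standard stability bound on $\mathbb{E}[R_\gamma(\hat g_m)-\hat R_\gamma(\hat g_m)]$ of order $\beta_l$; and your VC-based control of $R_\gamma(\hat g_m)-R_\gamma^\star$ is redundant for the direction the statement needs. Neither affects the conclusion, because your ERM-plus-Hoeffding step alone already yields the required one-sided bound $\hat R_\gamma(\hat g_m)\le R_\gamma^\star+\tfrac12(\epsilon+K\sqrt{V_{\mathcal G_m}/l})$, the extra slack terms only loosen the final inequality, and retaining both failure events reproduces the stated probability; what your arrangement buys is avoiding the data-dependent Hoeffding step, while the paper's buys an explicit route for where the $K\sqrt{V_{\mathcal G_m}/l}$ term and \Cref{ass:VC} enter.
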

See Appendix~\ref{proof:prop3} for the proof. With an imperfect estimator of the likelihood ratio, \Cref{prop:estimablebound} yields an estimable lower bound for coverage, where the key interpretable term is $\tfrac{1}{2}-\hat R_\gamma(\hat g_m)$. Intuitively, if the estimator is nearly accurate, then the discriminator $\hat g_m$ cannot distinguish between $P_m$ and $\tilde P_m$, resulting in $\hat R_\gamma(\hat g_m)\approx \tfrac{1}{2}$. Conversely, if the estimator is poor, $\hat g_m$ can easily separate and lead to $\hat R_\gamma(\hat g_m)\approx 0$, which weakens the coverage guarantee. Note that this requires estimating a $g_m$ for each mask $m$, which may seem costly. 
However, Section~\ref{sec:experiments} confirms that even with approximate estimators, our method achieves near-nominal coverage across diverse missingness settings—validating the practical robustness of our method.
\section{Experiments}
\label{sec:experiments}
The primary goals of numerical studies are: (1) to validate the MCV property of the proposed methods under missingness scenarios, (2) to compare the interval width and coverage performance with CP-MDA methods, (3) to demonstrate the necessity and utility of distribution shift corrections, (4) to study the impact of imperfect estimators of likelihood ratio and evaluate their miscoverage.

For simplicity, we consider $\mathcal{X}=\mathbb{R}^d$ and $\mathcal{Y}=\mathbb{R}$. We use the 
the nonconformity score $s$ defined by Conformalized Quantile Regression method \citep{romano2019conformalized}. Specifically, the model $\hat{f}$ is composed of an iterative ridge imputation and a pair of Quantile Regression Forest \citep{meinshausen2006quantile} models $(\hat{f}_{low},\hat{f}_{high}),$ which are both learned on an independent training set. $s$ is thus defined by $s(\tilde{x}, y)=\max ({\hat{f}}_{low}(\tilde{x})-y, y-{\hat{f}}_{high}(\tilde{x}))$.   Iterative Bayesian Ridge (MICE in Scikit-learn with sample\_posterior being True) is used to pre-impute the calibration. 
The target $1-\alpha$ is fixed to $90\%$, and the likelihood ratio $\omega_m$ is estimated using a Histogram-based Gradient Boosting Classification Tree.

We present results on synthetic data
and real UCI datasets 
under MCAR, MAR, and MNAR.
Missing values are generated accordingly. For each task, we generate different datasets and repeat experiments to compute a confidence interval for each mask's average coverage and interval width.
All results show that our methods significantly reduce the width of prediction intervals,
while also guaranteeing the MCV.
\subsection{Experiments on synthetic dataset}
\label{Section:exp on synthetic mcar}
We follow \citet{zaffran2023conformal,zaffran2024predictiveuncertaintyquantificationmissing}
  to generate synthetic datasets, as the oracle length of prediction intervals can be derived and compared.
 The data follows $Y=\beta^{T} X+\varepsilon$, $\beta \in \mathbb{R}^{d}$, with $d=10$  and $ X \sim \mathcal{N}(\mu, \Sigma)$ with $\mu=(1, \cdots, 1)^{T} $and  $\Sigma=\rho(1, \cdots, 1)^{T}(1, \cdots, 1)+(1-\rho) I_{d}$, $\rho =0.8$.  Gaussian noise  $\varepsilon \sim \mathcal{N}(0,1) \perp\!\!\!\perp(X, M)$  and  $\beta=(1,2,-1,3,-0.5,-1,0.3,1.7,0.4,-0.3)^{T} $. To better visualize the differences between methods, we use a relatively high missingness probability of 50\% per feature. We also report results for 20\% missingness in \Cref{appendix:comp} to provide another perspective. The Oracle interval width, defined as the minimum width of MCV interval knowing $X_M$, is calculated by Proposition 4.1 in \citet{zaffran2023conformal}. In our experiments, 500 training data and 100 calibration data are used. 100 test data are generated for 1023 types of missingness patterns ($2^{10}-1$, the completely missing pattern is excluded). Each group is used separately to evaluate the missingness-wise coverage and interval width. We repeat each experiment 500 times.

\begin{figure}[hbt]
\centering
\includegraphics[width=.92\columnwidth]{./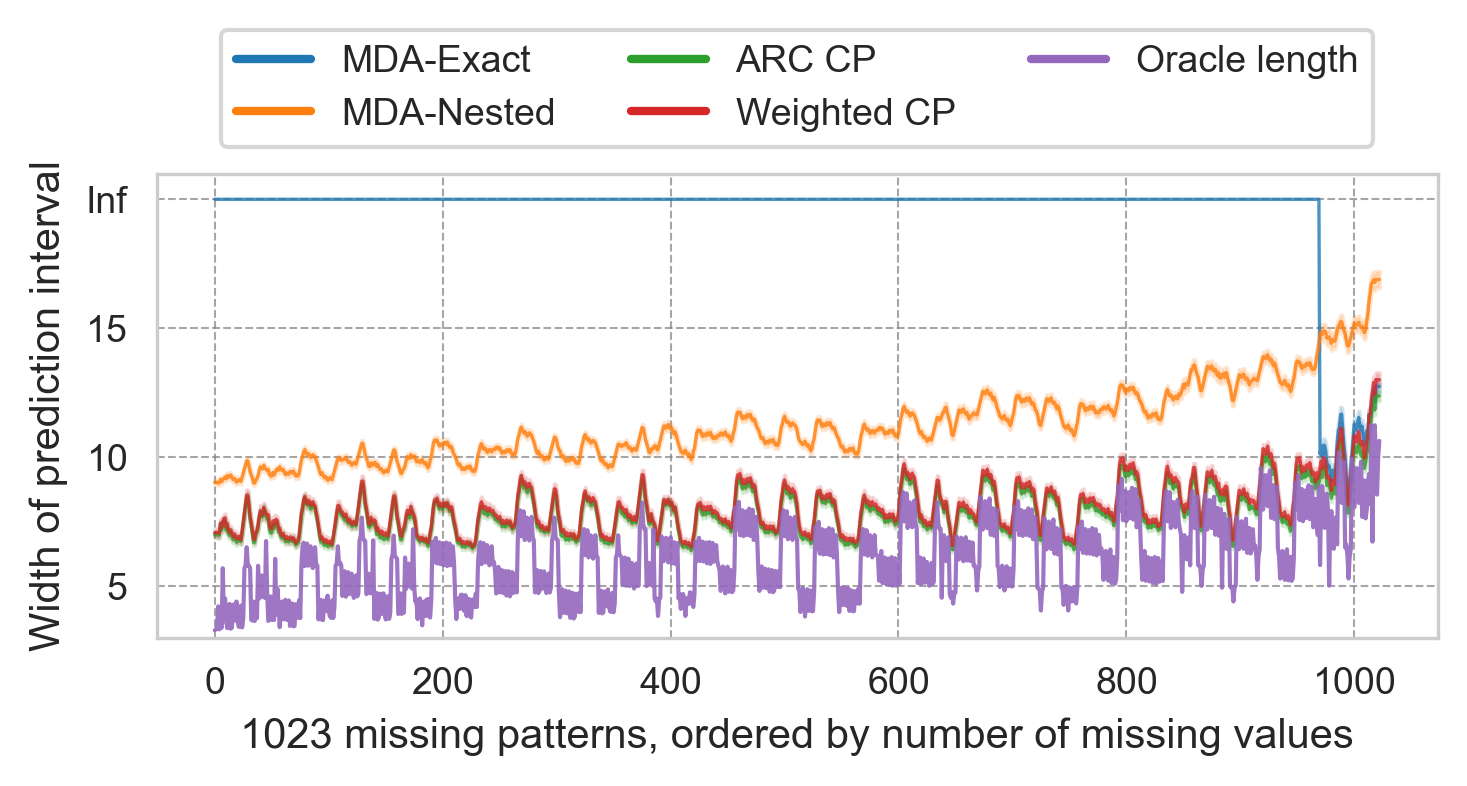}
\caption{Synthetic dataset on MCAR: prediction interval width across 1023 masks (mean and 95\% CI).}
\label{synthetic_width}
\end{figure}
\begin{figure}[hbt]
\centering
\includegraphics[width=.92\columnwidth]{./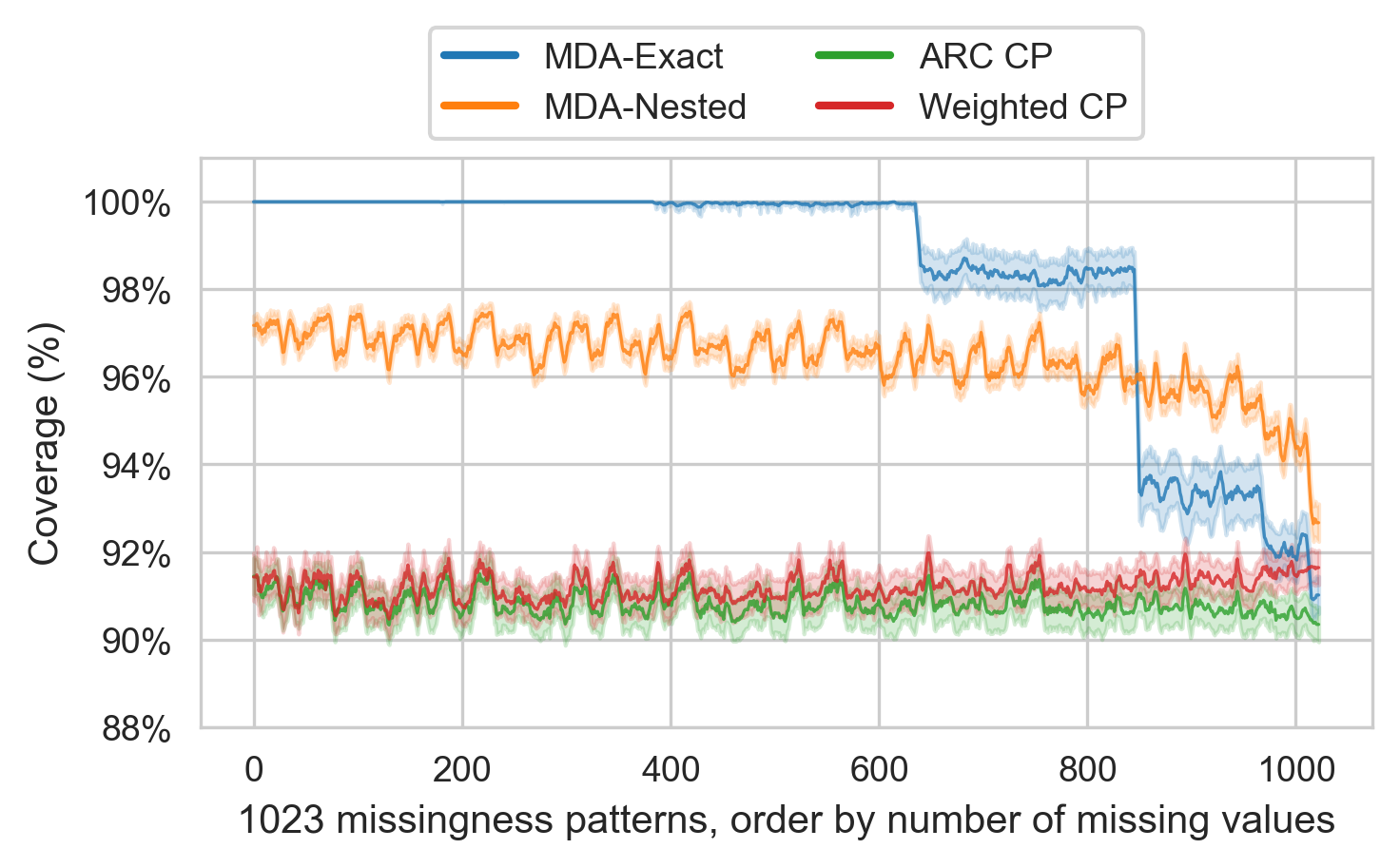}
\caption{Synthetic dataset on MCAR: prediction coverage across 1023 masks (mean and 95\% CI).}
\label{synthetic_coverage}
\end{figure}
\begin{table*}[!t]
\centering
\caption{Worst-case coverage (95\% CI) and average prediction interval width over 31 masks.}
\label{tbl:mar_mnar_results}
\begin{tabular}{lcccc}
\toprule
Mechanism & MDA-Exact & MDA-Nested & Weighted CP & ARC CP \\
\midrule
MAR  & 87.8--89.7\% / 5.84 & 87.9--89.8\% / 5.67 & 89.0--91.2\% / 5.79 & 88.3--90.2\% / 5.64 \\
MNAR & 87.8--89.8\% / 5.74 & 87.8--89.8\% / 5.64 & 89.1--91.0\% / 5.79 & 88.5--90.3\% / 5.60 \\
\bottomrule
\end{tabular}
\end{table*}

 \Cref{synthetic_width} and  \Cref{synthetic_coverage} compare our methods with CP-MDA methods missingness-wise. For clarity, 
 we applied a window smoothing with size of 5 to the curves. The unsmoothed curves are provided in \Cref{appendix:preimpute},
 and both lead to the same conclusion.
 These results confirm our theoretical MCV guarantee as established in \Cref{thm:weighted_cp_mcv,thm:ar_correction}, for all missingness patterns. 
 Our methods significantly reduce prediction interval width. In contrast, MDA-Exact fails to generate informative prediction sets for certain missingness patterns, 
  while MDA-Nested is more conservative. Overall, our methods achieve an average \textbf{30\% reduction}\footnote{The average of interval width over all 1023 missingness patterns is computed to compare the interval width.} in prediction width compared to MDA-Nested.

We also generate MAR and MNAR~(self-masked) mechanisms on the same synthetic dataset by methods in \citet{schouten2018generating, zaffran2024predictiveuncertaintyquantificationmissing}. The first 5 covariates $\{X_1, X_2, X_3, X_4, X_5\}$ may be missing, with total missingness probability 20\%. Under MAR, $M$ is a function $h_{\text{MAR}}(X_6,...,X_{10})$ and under MNAR, it is $h_{\text{MNAR}}(X_1,...,X_{5})$. 
We report the 95\% confidence interval for the worst-case coverage for 31 missingness patterns in \Cref{tbl:mar_mnar_results}. One should be careful when generating points for each pattern as $(X, Y)|M=m$ does not have an explicit form under MAR or MNAR. We repeat 100 experiments.
Our methods are MCV under both MAR and MNAR, with ARC CP providing the shortest prediction intervals.

\subsection{Experiments on real dataset}
\textbf{Concrete Data from the UCI repository} (\citet{concrete_compressive_strength_165}, \textbf{licensed under CC BY 4.0}): The goal is to predict compressive strength using 8 features. 630 training data and 100 calibration data are selected. For MCAR, 100 test data are generated per mask, with 50\% feature-wise missingness, resulting in 255 patterns ($2^8 - 1$). All four methods on all missingness patterns achieve 90\% coverage\footnote{See \Cref{appendix:comp}, \Cref{img:concrete_coverage} for details about missingness-wise coverage results}. Compared with MDA-Nested, ARC CP reduced the average width \textbf{by
12\%} and Weighted CP reduced average width \textbf{by 10\%}. 

For MAR, the first 4 features may be missing, with rates depending on the other 4 (totaling 20\%). Our methods are MCV with worst-case coverages: MDA-Exact $89.1\% \pm 1.64\%$, MDA-Nested $90.8\% \pm 1.43\%$, ARC CP $90.1\% \pm 1.63\%$, and  Weighted CP $90.6\% \pm 1.42\%$. ARC CP yields the narrowest average intervals—reducing width \textbf{by 7.5\%} relative to MDA-Nested (see \Cref{tbl:marconcrete}).
\textbf{Another 2 real datasets} with higher dimensions are provided in the Appendix~\ref{bike_sharing}. Both show the valid coverage of our methods.

\subsection{Ablation study of weight correction}
\label{section:weight correction}
To demonstrate the effectiveness of correction methods, we did experiments on the same synthetic data without correction, i.e., we pre-impute the calibration and apply the mask of the test point, ignoring the distribution shift. The detailed experiments are in \Cref{appendix:correction}, showing that the three methods have very similar interval widths
\Cref{img:correction_app,img:correction_width_app}, but, without correction, the worst mask conditional coverage is around 89\%, showing that without a suitable weight/acceptance for correction, the distribution shift results in an under-coverage. This confirms the necessity of correction when applying CP under missing data.
\subsection{Impact of imperfect estimators} 
Accurately estimating the likelihood ratio is never trivial. We conduct a synthetic case study (see Appendix~\ref{appendix:estimator quality}) to explicitly evaluate the effect of the likelihood ratio estimator’s quality on the coverage. Using a Gaussian generative model where the true density ratio is known, we show that (i) inaccurate estimators can lead to under- or over-coverage, and (ii) the empirical coverage correlates strongly with the estimator quality (e.g., Pearson correlation with the true ratio). Notably, we observe that when the correlation exceeds 0.3, the coverage  approaches the target $1 - \alpha$. These results indicate that exact density ratio values might be unnecessary—what matters is preserving the relative ordering of likelihoods. This tolerance to estimation noise makes the method practically robust.

\section{Conclusion and future works}
In this work, we addressed the challenge of constructing prediction sets with missing values while maintaining both MV and MCV. We introduce two novel strategies—Mask-Conditional Weighted CP and ARC CP—that overcome the key limitations of existing methods under general missingness mechanisms, including MCAR, MAR, and MNAR. Specifically, we addressed the conservativeness of CP-MDA-Nested and the impractical sample requirements of CP-MDA-Exact by leveraging generalized distributional imputation and weighted CP.

Extensive experiments on synthetic and real-world datasets demonstrated the effectiveness of our methods. Compared to prior approaches, our methods significantly reduce the width of prediction set while preserving the desired validity properties. Notably, our ARC method achieves a competitive balance between computational efficiency and predictive informativeness. It reduces 90\% of the computing time compared to traditional weighted CP in the case of distribution shift (see \Cref{app:running time}), making it a promising substitution candidate.

In summary, we introduced a new framework called preimpute–mask–then–correct framework for conformal prediction with missing data, together with two correction methods that guarantee MCV under general missingness mechanisms. Future work includes deriving tighter robustness bounds under likelihood ratio estimation error or violations of \Cref{assump:absolute_continuous}, and developing adaptive strategies for high-dimensional or imperfect imputations.

\clearpage
\bibliographystyle{plainnat}

\bibliography{main}

\newpage

\appendix

\onecolumn
\section{Discussion about quality of pre-imputation for calibration dataset}
\label{appendix:preimpute}
In our work, only the calibration dataset requires imputation. For the basemodel $\hat{f}:\mathcal{X}_{\mathrm{NA}} \rightarrow \mathcal{Y}$ that is trained on the training data, it could contain implicitly an imputation  (e.g. $\hat{f}$ could be a composition of a neural network $f_{\mathrm{NN}}:\mathcal{X} \rightarrow \mathcal{Y}$ and an imputation function $\varphi$, such that $\hat{f}=f_{\mathrm{NN}}\circ \varphi$) or it could be simple a tree model without imputation. Therefore, our method can be intergrated for any basemodel $\hat{f}$.

However, a key step for our method is the pre-imputation on calibration dataset. This section discusses the significance of pre-imputation quality for calibration datasets in uncertainty quantification. We begin by reviewing advanced imputation methods for missing values, followed by a rationale for adopting distributional imputations. Finally, we compare the performance of deterministic and distributional imputation methods through experiments.

Imputation methods can be broadly categorized into three types \citep{jarrett2022hyperimpute}: (1) \textbf{Simple value imputation}: Methods such as mean or median imputation use representative values for missing data without reconstructing the original data distribution.(2) \textbf{Iterative imputation} \citep{liu2014stationary,zhu2015convergence}:, such as MICE \citep{rubin1996multiple}, Missforest \citep{stekhoven2012missforest}, etc. In these approaches, the conditional distributions of each feature are estimated based on all the others, and missing values are imputed using such univariate models in a round-robin fashion until convergence. More recently, \citet{naf2024good} proposed mice- Distributional Random Forest (DRF) by leveraging the ability to approximate the conditional distribution of DRF and argued that for MAR condition, it is important to have a distributional imputation function rather than deterministic imputation, one of the three criteria they proposed for being a good imputation candidate. All these methods require a pre-defined functional form for each variable. (3) \textbf{Deep generative models}: such as GAN-based imputations \citep{yoon2018gain,yoon2020gamin} or VAE-based imputations \citep{du2023remasker,nazabal2020handling, ma2020vaem, mattei2019miwae,fortuin2020gp}. These approaches may leverage better capacity and efficiency of learning using deep function approximators and the ability to capture correlations among covariates by amortizing the parameters \citep{jarrett2022hyperimpute}. While powerful, these methods typically require large datasets and may be computationally expensive.

However, when it comes to imputation in an $X-Y$ type dataset with missing values in $X$, the traditional imputation is derived only as a function of $X_{\text{obs}(M)}$ predicting the missing part $X_{\text{mis}(M)}$, which limits the potential of imputation in a more general scenario. Under such a definition and for {\it impute-then-regression} tasks, \citet{le2021sa} has shown that ``imputation doesn't matter'' in the sense that a universal learner can achieve the Bayes optimal for almost all imputation functions. 
However, for uncertainty quantification and coverage guarantees, this is not sufficient, so we propose a broader definition of imputation, extending to distributional and generalized distributional imputations. We will demonstrate that a good imputer can help achieve the target coverage for the prediction interval if it employs generalized distributional imputation.

In the main text, we do not give a formal definition of the imputation function; instead, we use only $(\hat{X},Y)$ to represent imputed points. In this part, we formalize three types of imputation functions as follows and use $\varphi$ to represent an imputation function such that $\hat{X}\stackrel{d}{=}\varphi(\widetilde{X})$ or $\hat{X}\stackrel{d}{=}\varphi(\widetilde{X}, Y)$, depending on whether $\varphi$ takes $Y$ as input or not.
\begin{definition}[Deterministic imputation function (Dt)]
    We say $\varphi_{Dt}:\mathcal{X}_{\mathrm{NA}} \rightarrow \mathcal{X}$ is a deterministic imputation function, if it satisfies $\forall x\in \mathcal{X},m \in \{0,1\}^d$
    $$\varphi_{Dt}(\text{mask}(x,m))_{\text{obs}(m)}=x_{\text{obs}(m)}.$$
\end{definition}

\begin{definition}[Distributional imputation function (Ds)]
    We say a function $\varphi_{Ds}: \mathcal{X}_{\mathrm{NA}}  \to \mathcal{Z}(\mathcal{X})$
where $\mathcal{Z}(\mathcal{X})$ denotes the space of random variables over $\mathcal{X}$ is a distributional imputation function if it satisfies  $\forall x\in \mathcal{X},m \in \{0,1\}^d$:
    \[
P(\varphi_{Ds}(\text{mask}(x,m))_{\text{obs}(m)} = x_{\text{obs}(m)}) = 1.
\]
The missing part of the covariate, $X_{\text{mis}(M)}$, is then imputed by $\varphi_{Ds}(X)_{\text{mis}(M)}$. 

\end{definition}

The distributional imputation function can be seen as the first step of multiple imputations (MI) \citep{rubin1996multiple}, which fuses multiple estimations. 

\begin{definition}[Generalized distributional imputation function (GDs)]
    We say a function $\varphi_{GDs}: \mathcal{X}_{\mathrm{NA}} \times  \mathcal{Y} \to \mathcal{Z}(\mathcal{X}),$
 is a generalized distributional imputation function if it satisfies  $\forall x\in \mathcal{X},m \in \{0,1\}^d,y\in\mathcal{Y}$:
   \[
P(\varphi_{GDs}(\text{mask}(x,m),y)_{\text{obs}(m)} = x_{\text{obs}(m)}) = 1.
\]
The missing part of the covariate, $X_{\text{mis}(M)}$, is then imputed by $\varphi_{GDs}(\widetilde{X}(M),Y)_{\text{mis}(M)}$.
\end{definition}

Here, we present an additional experiment to compare the effectiveness of deterministic and distributional imputations on synthetic datasets (\cref{Section:exp on synthetic mcar}). To simulate  \textbf{deterministic imputation}, we disable posterior sampling in iterative Bayesian Ridge imputation, resulting in fixed imputed values.

\begin{figure}[H]
\begin{center}
\centerline{\includegraphics[width=0.8\textwidth]{./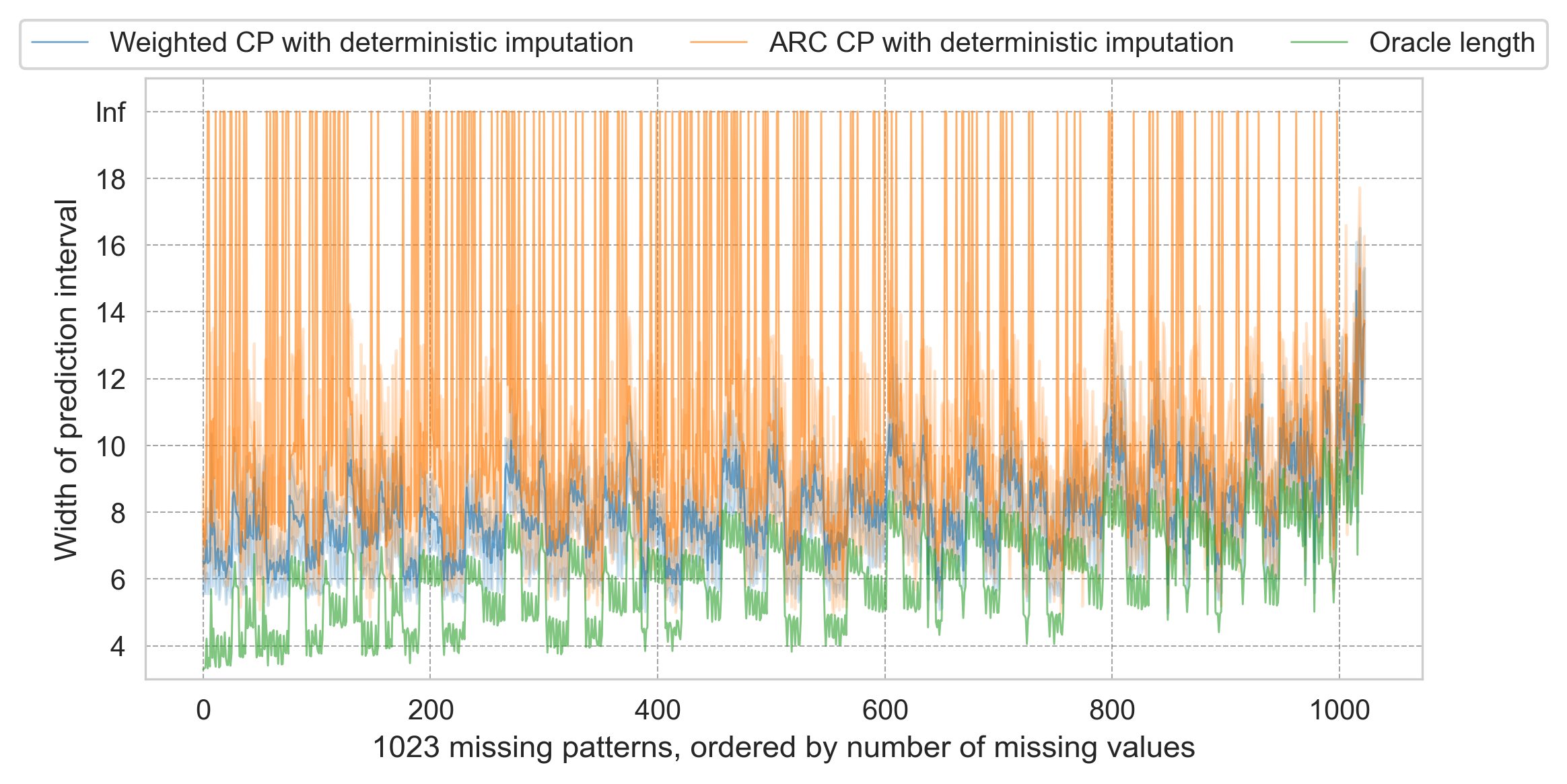}}
\caption{Synthetic dataset: mean and 99\% confidence interval of prediction interval width with respect to 1023 missingness patterns, results by using deterministic imputation.}
\label{img:width_deterministic}
\end{center}
\vskip -0.2in
\end{figure}

\begin{figure}[H]
\begin{center}
\centerline{\includegraphics[width=0.8\textwidth]{./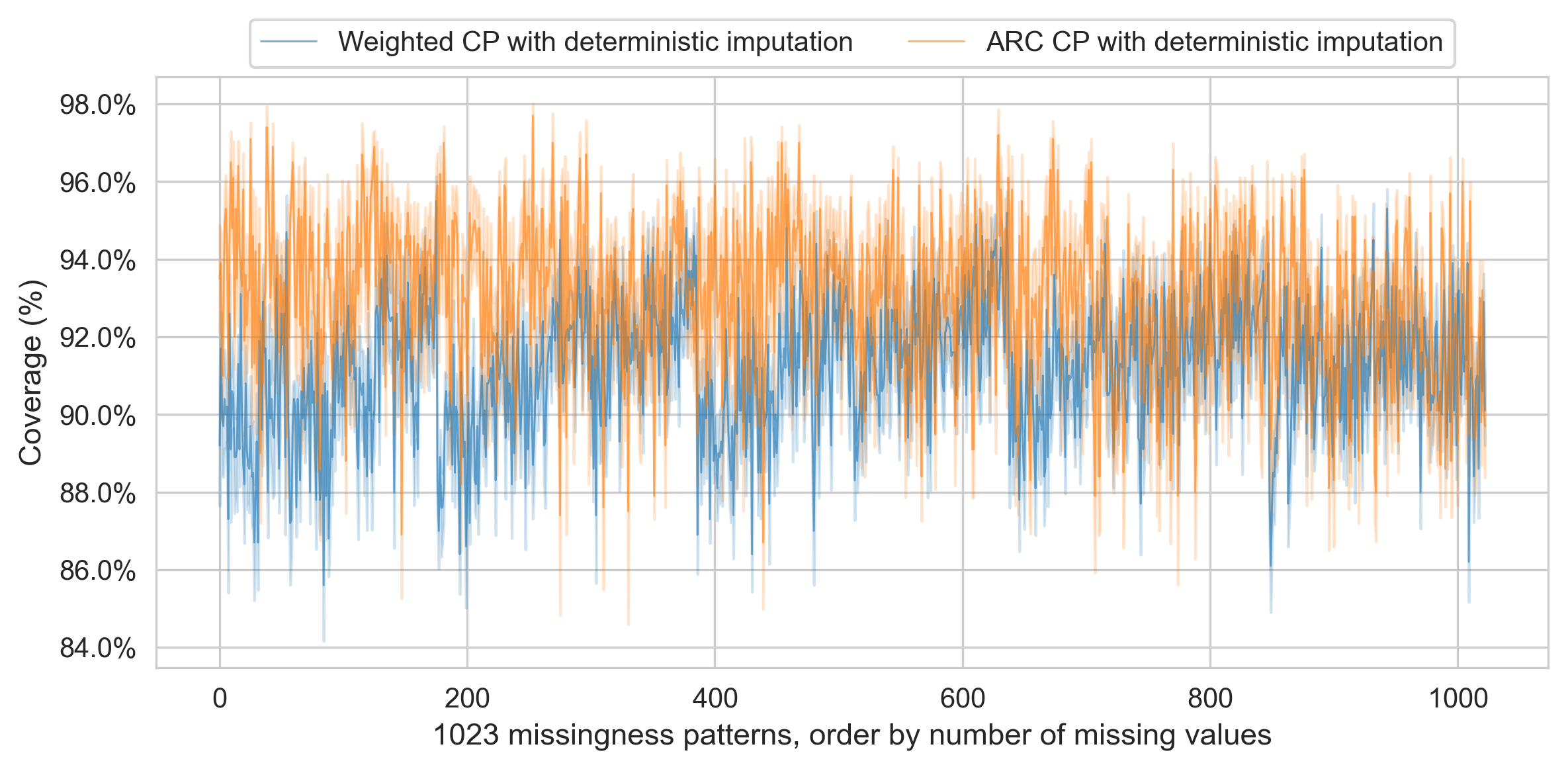}}
\caption{Synthetic dataset: mean and 99\% confidence interval of prediction coverage with respect to 1023 missingness patterns, results by using deterministic imputation.}
\label{img:coverage_deterministic}
\end{center}
\vskip -0.2in
\end{figure}
Compared with \Cref{synthetic_width} and \Cref{synthetic_coverage}, we observe from \Cref{img:width_deterministic,img:coverage_deterministic} that if we use a deterministic imputation instead of a distributional imputation, and keep everything else the same, the results may be suboptimal. This outcome aligns with our expectations: if we impute missing values using a deterministic function, the resulting imputed distribution would lack continuity and violate the absolute continuity assumption in \Cref{assump:absolute_continuous}, leading to two potential drawbacks:

\begin{enumerate}
    \item For Weighted CP: if the likelihood ratio $\omega_m$ of a specific point is a high value, say $\omega_m(x^j_m,y^j) \approx \infty$, it is highly likely that $p^j_m=1$ from definition of \Cref{thm:weighted_cp_mcv}. Consequently, when computing the weighted quantile as in \Cref{eq:weighted_prediction_set}, the weighted quantile will equal $s^j_m$. In other words, if even one likelihood ratio is extremely large, the weight can easily be influenced by that extreme value, distorting the weighted quantile. This distortion may result in inaccurate quantile computation, which can, in many practical scenarios, fail to guarantee MCV.
    \item For ARC CP: if the absolute continuity condition is not satisfied, implying that the likelihood ratio is unbounded (i.e., $K \to \infty$), most points are likely to be rejected during the acceptance-rejection step. This would lead to an insufficient number of samples in the calibration dataset, potentially resulting in infinitely wide prediction intervals for certain missingness patterns
\end{enumerate}

These results highlight that the quality of imputation plays a crucial role in our framework.
If the imputation is poor or purely deterministic, the resulting method degenerates, both theoretically and practically, toward CP-MDA-Exact, because the imputed calibration data no longer provides additional distributional information beyond the observed part.
In contrast, a well-calibrated distributional imputation allows our method to exploit richer information from the calibration data, effectively correcting the mask-induced distribution shift and achieving tighter prediction intervals with valid coverage.
Hence, the pre-imputation step is not only a preprocessing choice, but a key component that governs the efficiency and robustness of our uncertainty quantification framework.

\section{Implementation details of Weighted CP}
\label{appendix:weighted_cp_implementation}

As discussed in the main text, weighted CP in its general form involves the test label $y$ during inference, which poses challenges for efficient implementation. Nevertheless, as shown in Table~\ref{tbl:runningtime}, the weighted CP method can still be implemented with success in practice. We have proposed a series of optimizations that significantly reduce the computational burden, making it a viable choice. The optimizations are as follows:

\subsection{Search for Interval Bounds}

It is often reasonable to assume that the prediction set defined by \Cref{eq:weighted_prediction_set} is an interval $[B_l, B_u]$, otherwise, the prediction set is included in such an interval. This assumption allows us to reduce the computational cost by searching only for the lower and upper bounds of the prediction interval, instead of performing a full grid search across $\mathcal{Y}$.

\subsection{Coarse Grid Search}

To further accelerate the search, we introduce a step size $h$ and define the search grid as:
\[
\mathcal{G} = \{ y_{\min}, y_{\min} + h, y_{\min} + 2h, \dots, y_{\max} \}.
\]
We then define the estimated bounds as:
\[
\hat{B}_l = \min \left\{ y \in \mathcal{G} \mid y \notin \widehat{C}_{\alpha}(x),\ y + h \in \widehat{C}_{\alpha}(x) \right\},
\]
\[
\hat{B}_h = \max \left\{ y \in \mathcal{G} \mid y \in \widehat{C}_{\alpha}(x),\ y + h \notin \widehat{C}_{\alpha}(x) \right\}.
\]
This coarse discretization ensures that the conservativeness of the resulting interval is at most $2h$, while validity is preserved.

\subsection{Predefined Search Range}

We restrict the search range to a data-dependent interval to avoid unnecessary computation. Specifically, we define:
\[
y_{\min} = \min \left\{ y \mid s(\tilde{x}, y) \leq \max_{i=1,\dots,n} S^i_m \right\}, \quad
y_{\max} = \max \left\{ y \mid s(\tilde{x}, y) \leq \max_{i=1,\dots,n} S^i_m \right\}.
\]
Then the search is performed within $[y_{\min}, y_{\max}]$. To preserve theoretical validity, we use the following rules:
\[
\hat{B}_l = -\infty \quad \text{if } y_{\min} \in \widehat{C}_{\alpha}^W(\tilde{x}), \quad
\hat{B}_h = \infty \quad \text{if } y_{\max} \in \widehat{C}_{\alpha}^W(\tilde{x}).
\]

All the above techniques can make the Weighed CP more efficient and valid in theory, though being conservative at $2h$. While the AR-Corrected method is our primary recommendation for practical deployment due to its speed and flexibility, the weighted CP approach remains valuable. In particular, weighted CP avoids the added variance introduced by sub-sampling in the AR correction procedure, making it more stable in scenarios where variance matters.

\section{Proof}
\label{appendix:proof}

\subsection{Proof of \Cref{prop:bound}}
\label{proof:prop:bound}
Recall the definition of $Q_{M^{n+1}}$ and $P_{M^{n+1}}$ which are, the distribution of imputed calibration with missingness $M^{n+1}$ and the distribution of test point, conditionally on $M^{n+1}$. By making a direct Split CP on the two distribution conditionally on  $M^{n+1}=m$, by the Theorem 2 of \citep{barber2023conformal}, we have for all $m \in \{0,1\}^d$ such that $P(M=m)>0$:
\begin{equation}
    P(Y^{n+1}\in \hat{C}_{\alpha}(\widetilde{X}^{n+1})\mid M^{n+1}=m) \geq 1-\alpha-2d_{TV}(Q_{m},P_{m})
\end{equation}

For the term of distance of total variation, we have:
\begin{align}
    &2d_{TV}(Q_{m},P_{m})\\
    &=2\times \frac{1}{2}\int_{\mathcal{X}^{\text{obs}(m)}\times \mathcal{Y}} \|P_{m}(x_{\text{obs}(m)},y)-Q_{m}(x_{\text{obs}(m)},y)\|d(x_{\text{obs}(m)},y)\\
    &=\int_{\mathcal{X}^{\text{obs}(m)}\times \mathcal{Y}} \|
    \int_{\mathcal{X}^{\text{mis}(m)}}P(x,y\mid M=m)dx_{\text{mis}(m)}-\int_{\mathcal{X}^{\text{mis}(m)}}Q(x,y)dx_{\text{mis}(m)}\|d(x_{\text{obs}(m)},y) \\
    &=\int_{\mathcal{X}^{\text{obs}(m)}\times \mathcal{Y}} \|
    \int_{\mathcal{X}^{\text{mis}(m)}}(P(x,y\mid M=m)-P(x,y)+P(x,y)-Q(x,y))dx_{\text{mis}(m)}\|d(x_{\text{obs}(m)},y) \\
    &\leq \int_{\mathcal{X}^{\text{obs}(m)}\times \mathcal{Y}} 
    \int_{\mathcal{X}^{\text{mis}(m)}}\|(P(x,y\mid M=m)-P(x,y)+P(x,y)-Q(x,y))\|dx_{\text{mis}(m)} d(x_{\text{obs}(m)},y) \\
&\leq \int_{\mathcal{X}^{\text{obs}(m)}\times \mathcal{Y}} 
    \int_{\mathcal{X}^{\text{mis}(m)}}\|(P(x,y\mid M=m)-P(x,y)\|+\|P(x,y)-Q(x,y))\|dx_{\text{mis}(m)} d(x_{\text{obs}(m)},y) \\
&= \int_{\mathcal{X} \times \mathcal{Y}}\|(P(x,y\mid M=m)-P(x,y)\|+\|P(x,y)-Q(x,y))\|d(x,y) \\
&=
d_{TV}{(P,Q)}+\left\|\frac{dP(x,y\mid M=m)}{dP(x,y)}-1 \right\| P(x,y)d(x,y)\\
    &= d_{TV}{(P,Q)}+ \mathbb{E}_{P}\left[\bigg|\frac{d{P}(X,Y|M=m)}{dP(X,Y)}-1\bigg|\right] \label{eq_last expectation}
\end{align}
For the MCAR case, note that $P(X, Y|M)=P(X, Y)$; therefore, the second term in \Cref{eq_last expectation} is zero.
\subsection{Proof of \Cref{thm:weighted_cp_mcv}}
\label{proof:thm:weighted_cp_mcv}

For any $m \in \{0,1\}^d$, let's consider all event conditionally on $M^{n+1}=m$: by definition of $Q_m$ and that $M^{n+1} \perp \{\hat{X}^i_{\text{obs}(M^{n+1})},Y^i)\}_{i=1}^{n}$, we have the imputed-then-masked calibration points $(\{\hat{X}^i_{\text{obs}(M^{n+1})},Y^i)\}_{i=1}^{n}\mid M^{n+1}=m) \sim Q_m$ and the test point $(({X}^{n+1}_{\obs(M^{n+1})},Y^{n+1})\mid M^{n+1}=m) \sim P_{m} $.

Note that $P_m$ and $Q_m$ are probability defined on $\# \obs(m)$-dimensional space, let's consider the $d-$dimensional dataset: $(\{\mask (\hat{X}^i,M^{n+1}),Y^i\}_{i=1}^n\mid M^{n+1}=m)\sim Q^\star_m$ and the test point $((\widetilde{X}^{n+1},Y^{n+1}) \mid M^{n+1}=m)\sim P^\star_m$ \footnote{Or it is equivalent to the distribution $\mask (X^{n+1},M^{n+1}) \mid M^{n+1}=m)$, since $\widetilde{X}^{n+1}=\mask (X^{n+1},M^{n+1})$}. By definition of the mask operator, one can easily check that:
\[
P^\star_m(x,y)=P_m(x_{\obs(m)},y)  \ind_{{x_\mis(m)}=NA}
\]
\[
Q^\star_m(x,y)=Q_m(x_{\obs(m)},y)  \ind_{{x_\mis(m)}=NA}
\]

By applying the Lemma 2 in \citep{tibshirani2019conformal} and Assumption~\ref{assump:absolute_continuous}, we know that all the $n+1$ points are weighted exchangeable with weight functions $\omega_i=1$ for $i=1,...,n$ and $\omega_{n+1}=\frac{dP^{\star}_m}{dQ^{\star}_m}=\frac{dP_m}{dQ_m}$.

By applying Theorem 2 in \citep{tibshirani2019conformal}, we deduce immediately that, conditionally on $M^{n+1}=m$, the prediction set $\widehat{C}^{W}_{\alpha}(\widetilde{X}^{n+1})$ is exactly the weighted version and thus  \[P(Y^{n+1} \in \widehat{C}^{W}_{\alpha}(\tilde{X}^{n+1})\mid M^{n+1}=m) \geq 1-\alpha.\]

The formal algorithm is defined as \Cref{alg:weighted_cif}:

\begin{algorithm}[hbt!]
   \caption{Weighted CP with pre-imputation of Calibration}
   \label{alg:weighted_cif}
   \textbf{Input:} Calibration set $\mathrm{Cal}=\{(\widetilde{X}^{k}, Y^{k})\}_{k=1}^n$, generalized distributional imputation $\varphi$, predictive model $\hat{f}$, nonconformity score function $s_{\hat{f}}(\tilde{x},y)$, likelihood estimation function $\hat{\omega}_m(\mask(x,m),y)$ for $\omega_m(x_{\text{obs}(m)},y)$, significance level $\alpha$, test point $\widetilde{X}^{n+1}$. Imputation function $\varphi$ and the predictive model $\hat{f}$ are trained on an independent training set. \\
   \textbf{Output:} Prediction set $\widehat{C}^W_{\alpha}(\widetilde{X}^{n+1})$.

   \begin{algorithmic}[1]
   
   \FOR{$k \in \{1,...,n\}$}
       \STATE $(\widetilde{X}^{k}, Y^{k}) = (\varphi(\widetilde{X}^{k}, Y^{k}), Y^{k})$ 
       \textcolor{blue}{// Pre-impute Cal}
       \STATE  $(\hat{X}^{k}, Y^{k}) = (\mask(X^{k},{ M^{n+1})}, Y^{k})$
       \textcolor{blue}{// Add missingness $M^{n+1}$ on each data point}
       \STATE $\omega^{k} = \hat{\omega}_m(\mask(\hat{X}^{k},M^{n+1}),  Y^{k})$ \textcolor{blue}{//Compute weight}
   \ENDFOR
   \STATE Define $W^i(x_{\mathrm{obs}(M^{n+1})},y)$ and $W^{n+1}(x_{\mathrm{obs}(M^{n+1})},y)$ as in \Cref{thm:weighted_cp_mcv}.
   \STATE Compute the weighted version of the prediction set as:
   \begin{align}
       &\widehat{C}^{W}_{\alpha}(\widetilde{X}^{n+1}) :=
   \{ y : s_{\hat{f}}(\widetilde{X}^{n+1}, y) \nonumber \\
   &\leq \mathcal{Q}_{1-\alpha}\bigg(\sum_{i=1}^{n} W^{i}(X^{i}_{\mathrm{obs}(M^{n+1})}, y) \delta_{s_{\hat{f}}(\widetilde{X}^i,  Y^{i})} + W^{n+1}(X^{n+1}_{\mathrm{obs}(M^{n+1})}, y) \delta_{\infty} \bigg) \}. 
   \end{align}
   \end{algorithmic}
\end{algorithm}
\subsection{Proof of \Cref{thm:ar_correction}}
\label{proof:thm:ar_correction}
Before proceeding with the proof, we clarify the assumption made in \Cref{thm:ar_correction}. The theorem's statement assumes that each calibration sample’s importance weight $\omega^i_m$ is bounded by a constant $K > 0$. In fact, this is equivalent to assuming that the likelihood ratio function $\omega_m(x_{\obs(m)}, y) := \frac{P_m(x_{\obs(m)}, y)}{Q_m(x_{\obs(m)}, y)}$ is uniformly bounded for all $(x_{\obs(m)}, y)$.

\begin{proof}

1. For the first point, since all data have missing value $\mathrm{NA}$ for the index mis($m$) of $X$, we only need to prove that 
all points have the same marginal distribution on the index obs($m$) of $X$ and $Y$, and they are independent.

To prove that they have the same distribution, we only need to prove that 
the conditional distribution of $(\hat{X}^i_{\text{obs}(m)}, Y^i)$ given that $U \leq \frac{\omega^i_m}{K}$ is indeed the same distribution of $P_m$, which is, by using the cumulative distribution function, that 
\begin{equation}
    P\bigg(\hat{X}_{\text{obs}(m)}\leq x, Y\leq y\mid U \leq \frac{\beta}{K}\frac{dP_m}{dQ_m}(\hat{X}_{\text{obs}(m)},Y)\bigg)=P(X_{\text{obs}(m)}\leq x, Y\leq y \mid M=m),
    \label{eq:target_ar2}
\end{equation}
where $\beta$ is an unknow normalization constant such that $\omega_m^i=\beta \frac{dP_m}{dQ_m}$. 
Letting $B=\{U \leq \frac{\beta}{K}\frac{d{P}_m}{dQ_m}(\hat{X}_{\text{obs}(m)},Y)\}$ be an event, $A=\{\hat{X}_{\text{obs}(m)}\leq x, Y\leq y\}$, we have firstly by the definition of uniform distribution:

$$P(B\mid {\hat{X}_{\text{obs}(m)}=x, Y=y})=\frac{\beta}{K}\frac{dP_m(x,y)}{dQ_m(x,y)},$$
We have also:
\begin{align*}
    P(B)&=\int_{\mathcal{X}_{\obs(m)}\times \mathcal{Y}}P(B\mid {\hat{X}_{\text{obs}(m)}=x, Y=y})dQ_m(\hat{X}_{\text{obs}(m)}=x,Y=y)\\
    &=\int_{\mathcal{X}_{\obs(m)}\times \mathcal{Y}}\frac{\beta}{K}\frac{dP_m(x,y)}{dQ_m(x,y)}dQ_m(x,y) \\
    &= \frac{\beta}{K},
\end{align*}
\begin{align}
    Q_m(B\mid A)
&= \frac{\int_A \mathbb{P}_{\mathrm{U\sim Uniform}}\!\left(U\le \tfrac{\beta}{K}\,\omega_m(z)\mid z\right)\,dQ_m(z)}{Q_m(A)} \\
&= \frac{\int_A \tfrac{\beta}{K}\,\omega_m(z)\,dQ_m(z)}{Q_m(A)} \\
&= \frac{\beta}{K\,Q_m(A)}\int_A dP_m \\
& = \frac{\beta\,P_m(A)}{K\,Q_m(A)}.
\end{align}
By Bayes's rule:
\begin{align*}
    P(B|A)&=\frac{P(B,A)}{P(A)}\\
    &=\frac{P(U \leq \frac{\beta}{K}\frac{d{P}_m}{dQ_m}(\hat{X}_{\text{obs}(m)},Y),\hat{X}_{\text{obs}(m)}\leq x, Y\leq y)}{P(\hat{X}_{\text{obs}(m)}\leq x, Y\leq y )}\\
    &=\int_{-\infty}^{(x,y)}\frac{P(U \leq \frac{\beta}{K}\frac{d{P}_m(w_x,w_y)}{dQ_m(w_x,w_y)}\mid w_x\leq x, w_y\leq y)}{P(\hat{X}_{\text{obs}(m)}\leq x, Y\leq y)} dQ_m(w_x,w_y)) \\
    &=\frac{1}{P(\hat{X}_{\text{obs}(m)}\leq x, Y\leq y\mid M=m)}\int_{-\infty}^{(x,y)}\frac{\beta}{K} \frac{d{P}_m(w_x,w_y)}{dQ_m(w_x,w_y)} dQ_m(w_x,w_y) \\
    &=\frac{\beta}{K\cdot P(\hat{X}_{\text{obs}(m)}\leq x, Y\leq y)}\int_{-\infty}^{(x,y)}d{P}_m(w_x,w_y) \\
    &= \frac{\beta \cdot P(X_{\text{obs}(m)}\leq x, Y\leq y\mid M=m)}{K\cdot P(\hat{X}_{\text{obs}(m)}\leq x, Y\leq y)}.
\end{align*}

Note that
$P(\hat{X}_{\text{obs}(m)}\leq x, Y\leq y\mid U \leq \frac{\beta}{K}\frac{dP_m}{dQ_m}(\hat{X}_{\text{obs}(m)},Y))=P(A|B)$, therefore we have:
\begin{align*}
    P(A|B)&=P(B|A)\frac{P(A)}{P(B)}\\
    &=\frac{\beta \cdot P(X_{\text{obs}(m)}\leq x, Y\leq y\mid M=m)}{K\cdot P(\hat{X}_{\text{obs}(m)}\leq x, Y\leq y)} \cdot \frac{P(\hat{X}_{\text{obs}(m)}\leq x, Y\leq y)}{\beta/K} \\
    &=P(X_{\text{obs}(m)}\leq x, Y \leq y \mid M=m),
\end{align*}
which is the target Equation~\eqref{eq:target_ar2}. By applying the acceptance-rejection method on the imputed calibration data, the obtained data points follow the same distribution as $(X_{\text{obs}(m)}, Y)$. 

Additionally, because the points in $ (\hat{X}^i, Y^i) \cup (X^{n+1}, Y^{n+1})$ are independent with each other by definition and $U \perp\!\!\!\perp (X^k, Y^k)_{k=1}^{n+1}$, all data points in $\widetilde{Cal}_m$ and $(X^{n+1}_{\text{obs}(m)}, Y^{n+1})$ are mutually independent, we conclude that all the data points are i.i.d.

2. As proved before, conditionally on $M=m$, all data points in $\widetilde{Cal}_m$ and $(X^{n+1}_{\text{obs}(m)}, Y^{n+1})$ are i.i.d., satisfying the condition (exchangeability) of Conformal Prediction, yielding for any marginal valid method with target $1-\alpha$:
$$P(Y^{n+1} \in \widehat{C}^{AR}_{ \alpha}\left(\text{mask}({X}^{n+1},m)\right)\mid M^{n+1}=m) \geq 1-\alpha.$$

\end{proof}

\begin{algorithm}[hbt]
   \caption{Acceptance-rejection corrected CP with pre-imputation of Calibration}
   \label{alg:ar_cif}
   \textbf{Input:} Calibration set $\mathrm{Cal}=\{(\widetilde{X}^{i}, Y^{i})\}_{i=1}^n$, generalized distributional imputation $\varphi$, predictive model $\hat{f}$, nonconformity score function $s_{\hat{f}}(\tilde{x}, y)$, likelihood estimation function $\hat{\omega}_m(\mask(x,m),y)$ for $\omega_m(x_{\text{obs}(m)},y)$ for $\omega_m(x_{\text{obs}(m)},y)$ with its maximum value $\omega_{{max}}$, significance level $\alpha$, test point $\widetilde{X}^{n+1}$.  The imputation function $\varphi$ and the predictive model $\hat{f}$ are trained on an independent training set. \\
   \textbf{Output:} Prediction set $\widehat{C}^{AR}_{ \alpha}(\widetilde{X}^{n+1})$.

   \begin{algorithmic}[1]
   \STATE $\mathcal{I}_{AR}=\{\}$
   \FOR{$k \in \{1,...,n\}$}
        \STATE Pre-impute Calibration point $(\varphi(\widetilde{X}^{k}, Y^{k}),  Y^{k})$
       \STATE Sample $U^{k} \sim U(0, 1)$
       \STATE Compute $q^{k} = \frac{\hat{\omega}_m(\mask (\varphi(\widetilde{X}^{k}, Y^{k}),M^{n+1}), \;Y^{k})}{\omega_{\mathrm{max}}}$
       \IF{$U^{k} < q^{k}$}
           \STATE $\mathcal{I}_{AR}=\mathcal{I}_{AR} \cup \{k\}$
           \STATE $\hat{X}^k = \varphi(\widetilde{X}^{k}, Y^{k})$
       \ENDIF
   \ENDFOR 
   \STATE $\widetilde{\mathrm{Cal}}=\{(\hat{X}^i,{Y}^i)\}_{i \in \mathcal{I}_{AR}}$
   \textcolor{blue}{// $|\mathcal{I}_{AR}|$ points are accepted such that $\{(\hat{X}^i,{Y}^i)\}_{i \in \mathcal{I}_{AR}} \subseteq \{(\varphi(\widetilde{X}^{i}, Y^{i}),  Y^{i})\}_{i=1}^n$}
   
   \STATE The updated dataset $\widetilde{\mathrm{Cal}}_{M^{n+1}} = \{\text{mask}(\hat{X}^{i},{M^{n+1}}), {Y}^{i})\}_{i \in \mathcal{I}_{AR}}$.
   \textcolor{blue}{// Mask $\widetilde{\mathrm{Cal}}$ with the missingness of test point $M^{n+1}$}
   \STATE $\forall i \in \mathcal{I}_{AR}$, denote $\hat{S}_{M^{n+1}}^i=s_{\hat{f}}(\text{mask}(\widetilde{X}^{i},{M^{n+1}}), {Y}^{i})$

   \STATE Compute the AR-corrected prediction set:
   \begin{align}
       &\widehat{C}^{AR}_{ \alpha}(\widetilde{X}^{n+1}) := \bigg\{ y \in \mathcal{Y}: s_{\hat{f}}(\widetilde{X}^{n+1}, y)\leq \mathcal{Q}_{1-\alpha} \left(\sum_{i \in \mathcal{I}_{AR}} \delta_{\hat{S}^i_{M^{n+1}}} + \delta_{\infty}\right)\bigg\}.\label{eq:ar_prediction_set}
   \end{align}
      \end{algorithmic}
\end{algorithm}

\begin{lemma}[Bayes Classification Risk and Total Variation Distance]
\label{lemma:bayes_tv_equivalence}
Let $(\Omega, \mathcal{F})$ be a measurable space,
and let $\mathbb{P}$ and $\mathbb{Q}$ be two probability measures on it.
Assume there exists a $\sigma$-finite measure $\mu$ dominating both,
and denote the corresponding densities by
$p = \frac{d\mathbb{P}}{d\mu}$ and $q = \frac{d\mathbb{Q}}{d\mu}$.

Define the joint distribution $\Pi$ on $\Omega \times \{0,1\}$ by
\[
\Pi(dx,dc)
= \tfrac{1}{2}\,\mathbb{P}(dx)\,\delta_{1}(dc)
+ \tfrac{1}{2}\,\mathbb{Q}(dx)\,\delta_{0}(dc),
\]
where $\delta_c$ denotes the Dirac measure concentrated at $c$.
Then the marginal distribution of $X$ is
\[
\mathbb{D} := \tfrac{1}{2}(\mathbb{P} + \mathbb{Q}).
\]

For any measurable classifier $\mathsf{c}:\Omega\to\{0,1\}$,
define the (0–1) classification risk as
$
\mathcal{R}(\mathsf{c})
:= P_{(X,C)\sim \Pi}[\mathsf{c}(X)\neq C]
$, and the Bayes-optimal risk is
$
\mathcal{R}^\star := \inf_{\mathsf{c}} \mathcal{R}(\mathsf{c}).
$
Then we have:
\[
\mathcal{R}^\star
= \frac{1}{2} - \frac{1}{2} d_{\mathrm{TV}}(\mathbb{P}, \mathbb{Q}),
\]
where
\[
d_{\mathrm{TV}}(\mathbb{P}, \mathbb{Q})
:= \frac{1}{2}\int_\Omega |p(x) - q(x)|\,d\mu(x).
\]
\end{lemma}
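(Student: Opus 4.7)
The plan is to reduce the classification risk to a pointwise optimization problem, then invoke the standard identity $\min(a,b)=\tfrac12(a+b-|a-b|)$ to match the total variation distance. First, I would rewrite the risk of an arbitrary measurable classifier $\mathsf c$ using the joint law $\Pi$ and the disintegration of $\Pi$ into its conditionals. Since $\Pi$ assigns equal prior mass $1/2$ to each class and the class-conditional distribution of $X$ given $C=1$ is $\mathbb P$ (resp.\ $\mathbb Q$ for $C=0$), the risk becomes
\begin{equation*}
\mathcal R(\mathsf c)=\tfrac12\,\mathbb P\bigl(\mathsf c(X)=0\bigr)+\tfrac12\,\mathbb Q\bigl(\mathsf c(X)=1\bigr)=\tfrac12\int_\Omega\bigl[\mathbf 1_{\{\mathsf c(x)=0\}}p(x)+\mathbf 1_{\{\mathsf c(x)=1\}}q(x)\bigr]d\mu(x).
\end{equation*}

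Second, I would observe that the integrand at each $x$ is minimized by choosing $\mathsf c(x)$ so as to pick the smaller of $p(x)$ and $q(x)$, namely the Bayes rule $\mathsf c^\star(x)=\mathbf 1_{\{p(x)\ge q(x)\}}$. This pointwise argument (a.e.\ with respect to $\mu$) yields
\begin{equation*}
\mathcal R^\star=\tfrac12\int_\Omega\min\bigl(p(x),q(x)\bigr)d\mu(x).
\end{equation*}
Since any measurable $\mathsf c$ gives an integrand no smaller than $\min(p,q)$ at every $x$, and $\mathsf c^\star$ achieves equality, the infimum is attained and equals this expression.

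Third, I would apply the elementary identity $\min(a,b)=\tfrac12(a+b-|a-b|)$ pointwise. Using that $\int p\,d\mu=\int q\,d\mu=1$ and the definition of total variation, one gets
\begin{equation*}
\mathcal R^\star=\tfrac14\int_\Omega\bigl(p(x)+q(x)-|p(x)-q(x)|\bigr)d\mu(x)=\tfrac12-\tfrac12\cdot d_{\mathrm{TV}}(\mathbb P,\mathbb Q),
\end{equation*}
which is the claimed identity. There is no substantial obstacle here; the only mild subtlety is justifying that the pointwise-optimal rule is measurable (it is, since $\{p\ge q\}$ is measurable as both densities are measurable functions) and that the infimum over all measurable classifiers is indeed attained by $\mathsf c^\star$, which follows from the pointwise minimization since the risk is an integral of pointwise-minimized nonnegative integrands.
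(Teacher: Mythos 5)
Your proposal is correct and follows essentially the same route as the paper: both reduce the Bayes risk to $\tfrac12\int_\Omega \min\{p(x),q(x)\}\,d\mu(x)$ and then identify this with $\tfrac12-\tfrac12 d_{\mathrm{TV}}(\mathbb P,\mathbb Q)$. The only cosmetic difference is that you obtain the optimal rule by direct pointwise minimization of the class-conditional error integrand (and close with $\min(a,b)=\tfrac12(a+b-|a-b|)$), whereas the paper passes through the posterior $\eta(x)=p(x)/(p(x)+q(x))$ and the standard formula $\mathcal R(\mathsf c^\star)=\mathbb E_{\mathbb D}[\min(\eta,1-\eta)]$ before invoking $d_{\mathrm{TV}}=1-\int\min\{p,q\}\,d\mu$.
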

\begin{proof}
The posterior probability that a sample $x$ is drawn from $\mathbb{P}$ (i.e., $C=1$)
given $x$ is defined as
\[
\eta(x) := P(C=1 \mid X=x)
= \frac{\tfrac{1}{2} p(x)}{\tfrac{1}{2} p(x) + \tfrac{1}{2} q(x)}
= \frac{p(x)}{p(x) + q(x)}.
\]
The marginal distribution of $X$ under the joint law $\Pi$ is
\[
\mathbb{D}(dx) = \tfrac{1}{2}(\mathbb{P} + \mathbb{Q})(dx)
= \tfrac{1}{2}(p(x) + q(x))\, d\mu(x).
\]
The Bayes-optimal classifier is
\[
\mathsf{c}^\star(x) = \mathbf{1}\{\eta(x) > 1/2\},
\]
and the corresponding Bayes classification risk is
\[
\mathcal{R}(\mathsf{c}^\star)
= P_{(X,C)\sim \Pi}[\mathsf{c}^\star(X) \neq C]
= \mathbb{E}_{X \sim \mathbb{D}}
   \big[ \min(\eta(X),\, 1 - \eta(X)) \big].
\]
Substituting the definition of $\eta(x)$ yields
\[
\min\!\left( \frac{p(x)}{p(x)+q(x)},\, \frac{q(x)}{p(x)+q(x)} \right)
= \frac{\min\{p(x), q(x)\}}{p(x)+q(x)}.
\]
Therefore,
\[
\mathcal{R}(\mathsf{c}^\star)
= \int_\Omega \tfrac{1}{2}(p(x)+q(x))
     \frac{\min\{p(x),q(x)\}}{p(x)+q(x)}\, d\mu(x)
= \tfrac{1}{2}\int_\Omega \min\{p(x), q(x)\}\, d\mu(x).
\]
Finally, recall that the total variation distance satisfies
\[
d_{\mathrm{TV}}(\mathbb{P},\mathbb{Q})
= \tfrac{1}{2}\int_\Omega |p(x)-q(x)|\,d\mu(x)
= 1 - \int_\Omega \min\{p(x), q(x)\}\,d\mu(x),
\]
which gives
\[
\mathcal{R}(\mathsf{c}^\star)
= \tfrac{1}{2} - \tfrac{1}{2}\, d_{\mathrm{TV}}(\mathbb{P}, \mathbb{Q}).
\]
\end{proof}

\subsection{Proof of \Cref{prop:imperfect_densityratio}}
\label{proof:imperfect_densityratio}
We fix an $m \in \{0,1\}^d$, let $({X}^{\tilde{\omega}}_{\obs (m)},{Y}^{\tilde{\omega}})$ a random vector following $d\tilde{P}_m:=\tilde{\omega}_m(\hat{X}_{\obs (m)},Y)\cdot dQ_m$, which is a well defined new distribution via a Radon–Nikodym derivative with respect to $Q_m$, using a normalized weighting function $\tilde{\omega}_m$. We define for convenience that ${X}^{\tilde{\omega}}_{\mis (m)}:=\NA$. 

Since $\widehat{C} _{\alpha}^{\widehat{W}}$ is constructed on calibration dataset $Q_m$ with correction weights $\tilde{\omega} _m$, it means that the reweighted calibration distribution is equal to the distribution of $({X}^{\tilde{\omega}}_{\obs (m)},{Y}^{\tilde{\omega}})$. 
By construction, this means that if we use the same Weighted CP method to compute a prediction interval for ${X}^{\tilde{\omega}}$, then we have the exact valid coverage:
\begin{equation}
\label{eq:contructedxy}
    P(Y^{\tilde{\omega}} \in \widehat{C} _{\alpha}^{\widehat{W}}({X}^{\tilde{\omega}})) \geq 1-\alpha.
\end{equation}

Besides, we note that, by definition of likelihood ratio $\omega_m$, the real $({X}_{\obs (m)},Y)$ follows the distribution $P_m=\omega_m(\hat{X}_{\obs (m)},Y)\cdot Q_m$. Denote by $A$ the event of $Y$ being in the prediction interval $\widehat{C} _{\alpha}^W(\widetilde{X})$ conditionally on $M^{n+1}=m$, we have, by the basic property of total variation distance:
\begin{align*}
    & |P(Y^{n+1} \in \widehat{C} _{\alpha}^{\widehat{W}}(\widetilde{X}^{n+1})\mid M^{n+1}=m)-P({Y}^{\tilde{\omega}} \in \widehat{C} _{\alpha}^{\widehat{W}}({X}^{\tilde{\omega}})\mid M^{n+1}=m)| \\
    =& |P_{(X_{\obs(m)},Y)\sim P_m}(A)-P_{(X_{\obs(m)},Y)\sim\tilde{\omega}_m(\hat{X}_{\obs (m)},Y)\cdot dQ_m}(A)| \\
    =& |P_{(X_{\obs(m)},Y)\sim P_m}(A)-P_{(X_{\obs(m)},Y)\sim\tilde{P}_m}(A)| \\
    \leq& d_{TV}(P_m, \tilde{P}_m)\\
\end{align*}
Combine \Cref{eq:contructedxy} with the previous results, we have the result of \Cref{prop:imperfect_densityratio}.

\subsection{Regularity assumptions}
\label{sec:regularity}

\paragraph{Notation.}
Fix a mask $m$. $P_m$ and $\tilde P_m$ are probability measures on
$\mathcal X_{\mathrm{obs}(m)}\times\mathcal Y$.
Introduce the balanced mixture on
$(\mathcal X_{\mathrm{obs}(m)}\times\mathcal Y)\times\{0,1\}$:
\[
\Pi_m \;=\; \tfrac12\,\big(P_m\times\delta_1\big)\;+\;\tfrac12\,\big(\tilde P_m\times\delta_0\big),
\]
where $\delta_c$ denotes the Dirac mass at $c\in\{0,1\}$.
Let $\mathcal G\subset\{\,g:(\mathcal X_{\mathrm{obs}(m)}\times\mathcal Y)\to\{0,1\}\,\}$
be a hypothesis class. Define the $0$–$1$ loss
\[
\gamma\big(g,(x_{\mathrm{obs}(m)},y),c\big)=\mathbf 1\{\,g(x_{\mathrm{obs}(m)},y)\neq c\,\}\in[0,1],
\]
its risk
\[
R_\gamma(g):=\mathbb E_{((X_{\mathrm{obs}(m)},Y),C)\sim \Pi_m}\!\big[\gamma(g,(X_{\mathrm{obs}(m)},Y),C)\big],
\]
and the Bayes risk $R_\gamma^\star:=\inf_{g}R_\gamma(g)$.
By Lemma~\ref{lemma:bayes_tv_equivalence},
$R_\gamma^\star=\tfrac12-\tfrac12\,d_{\mathrm{TV}}(P_m,\tilde P_m)$.

We restrict the search for the classifier $g_m$ to a hypothesis class $\mathcal{G}_m$, from which it is obtained via empirical minimization of the $0$–$1$ loss defined above.

\begin{assumption}[Separability of $\mathcal{G}_m$]\label{ass:M}
There exists a countable $\mathcal G'\subset\mathcal G_m$ such that for every $g\in\mathcal G_m$,
there is a sequence $(g_k)\subset\mathcal G'$ with
$\gamma\big(g_k,(x_{\mathrm{obs}(m)},y),c\big)
\to \gamma\big(g,(x_{\mathrm{obs}(m)},y),c\big)$ for all $((x_{\mathrm{obs}(m)},y),c)$.
\end{assumption}

\begin{assumption}[VC-class]\label{ass:VC}
Let $\mathcal A_{\mathcal{G}_m} := \{\,\{(x_{\mathrm{obs}(m)},y) : g_m(x_{\obs(m)},y)=1\}\;:\; g_m\in{\mathcal{G}_m}\}$ 
be the collection of decision sets induced by ${\mathcal{G}_m}$.
We assume $\mathcal A_{{\mathcal{G}_m}}$ is a VC-class with finite dimension $V\ge1$.
\end{assumption}

\begin{assumption}[ERM over $\mathcal{G}_m$]\label{ass:ERM01}
Given $l$ i.i.d.\ samples from $\Pi_m$, let
$\hat g_m\in\arg\min_{g\in\mathcal G_m}\hat R_\gamma(g)$ be an empirical risk minimizer of $\gamma$ over $\mathcal G$.
\end{assumption}

\begin{assumption}\label{ass:realizable}
There exists $g_m^\star \in \mathcal{G}_m$ such that $R_\gamma(g_m^\star)=R_\gamma^\star$.
\end{assumption}

\begin{assumption}[Uniform $\beta$-stability of the training algorithm]\label{ass:stability}
Let $\check{Z}=(Z^1,\ldots,Z^l)\in\mathcal Z^l$ with $Z_i=(X^i_{\obs(m)},Y^i,C_i)$ i.i.d.\ $\sim \Pi_m$.
Let $\mathcal{T}$ be a learning algorithm that outputs a classifier $\hat{g}_m = \mathcal{T}(\check{Z}) \in \mathcal{G}_m$.
For $i \in \{1, \ldots, l\}$, define the dataset $\check{Z}^{(i)}$ by replacing $Z^i$ with an independent copy $Z^{\prime} \sim \Pi_m$, i.e.,  
$\check{Z}^{(i)} = (Z^1, \ldots, Z^{i-1}, Z^{\prime}, Z^{i+1}, \ldots, Z^l)$.
We say that the algorithm $\mathcal{T}$ is \emph{uniformly $\beta_l$-stable} (with respect to the loss $\gamma$) if there exists $\beta_l \geq 0$ such that for all $i \in \{1, \ldots, l\}$ and all $z = (x_{\obs(m)}, y, c) \in \mathcal{X}_{\obs(m)} \times \mathcal{Y} \times \{0,1\}$,
\[
\bigl| \gamma(\mathcal{T}(\check{Z}), z) - \gamma(\mathcal{T}(\check{Z}^{(i)}), z) \bigr| \leq \beta_l
\quad \text{almost surely (w.r.t.\ } \Pi_m^{\,l+1}, \text{the joint distribution of } \check{Z}, Z^{\prime}).
\]
\end{assumption}

The assumptions above are standard in learning theory and hold for many practical classifiers under regularity or capacity constraints.
For example, kernel methods, regularized logistic regression, and neural networks with bounded weights satisfy separability and finite VC-dimension assumptions.
Tree-based ensembles such as Random Forests or Gradient Boosted Trees also satisfy these assumptions when the number and depth of trees are bounded.
Assumption~\ref{ass:stability} (Uniform $\beta_l$-stability) is known to hold for regularized ERM algorithms (e.g., SVMs, logistic regression, $\beta_l=O(\sqrt{\frac{1}{l}})$) and for ensemble methods under baggings (e.g. bagging tree models, $\beta_l=O(\sqrt{\frac{1}{{l}}})$) ~\citep{soloff2024bagging}.
Therefore, these conditions cover most standard supervised learning algorithms in practice.

\begin{algorithm}[hbt!]
\caption{Construction of datasets from $P_m$ and $\tilde{P}_m$ via acceptance-rejection sampling}
\label{alg:generate_P}
Suppose we are given $J$ i.i.d. samples $\{(\check{X}^i, \check{Y}^i, \check{M}^i)\}_{i=1}^J$. We first apply the same pre-imputation function used in the main text to obtain a fully observed dataset that follows distribution $Q$ by construction.

We then apply the fixed mask $m$ to each data point, yielding masked samples $\{(\widetilde{\check{X}}^i, \check{Y}^i)\}_{i=1}^J:=\{(\mask({\check{X}}^i,m), \check{Y}^i)\}_{i=1}^J$, which follow the distribution $Q_m$. 

\medskip

To generate i.i.d. samples from $\tilde{P}_m$, defined via the likelihood ratio $\tilde{\omega}_m$ as $d\tilde{P}_m := \tilde{\omega}_m \, dQ_m$, we apply an acceptance-rejection scheme with proposal distribution $Q_m$:

\begin{enumerate}
    \item Select a candidate $Z^i = (\widetilde{\check{X}}^i, \check{Y}^i)$ from the dataset, and draw $U \sim \mathrm{Unif}(0,1)$ independently.
    \item Accept $Z^i$ if $U \leq \tilde{\omega}_m(Z^i) / \tilde{K}$, where $\tilde{K} \geq \sup_z \tilde{\omega}_m(z)$.
\end{enumerate}

Each accepted point is an exact draw from $\tilde{P}_m$, and repeating the procedure yields an i.i.d. dataset of size $l_2$ from $\tilde{P}_m$.

\medskip

To construct the dataset from $P_m$, we select all samples from the original collection $\{(\check{X}^i, \check{Y}^i, \check{M}^i)\}_{i=1}^J$ that satisfy $\check{M}^i = m$. These points follow $P_m$ by definition.

\medskip

This procedure yields two independent datasets, each consisting of i.i.d. samples from $P_m$ and $\tilde{P}_m$, respectively.
\end{algorithm}

\subsection{Proof of \Cref{prop:estimablebound}}
\label{proof:prop3}
(1). Suppose we have another dataset of size $l$ which is denoted by $Z_c:=\{(\check{X}^i,\check{Y}^i,\check{C}^i)\}_{i=1}^{l}$ and an estimator of likelihood ratio $\hat{\omega}_m$. Our goal is to provide an estimation for $d_{TV}(P_m,\ \tilde{\omega}_m(\hat{X}_{\obs (m)},Y)\cdot Q_m)$.

Since $\tilde{\omega}_m$ is the normalized estimator of likelihood ratio, $\tilde{P}_m:=\tilde{\omega}_m(\hat{X}_{\obs (m)},Y)\cdot Q_m$, which defines a valid probability distribution.

Under Assumptions~\ref{ass:M} and \ref{ass:VC},  by Corollary~3 in \citet{Massart_2006}, we have
\[
\mathbb{E}_{Z_c}\!\big[R_\gamma(\hat g_m)\big]-R_\gamma^\star\;\le\; K\sqrt{\tfrac{V_{\mathcal{G}}}{l}},
\]
where the $K$ is an universal constant independent of the sample size and the distribution and $V_{\mathcal{G}}$ is the VC-dimension of the hypothesis class $\mathcal{G}$ as in \Cref{ass:VC}. 

(2). Consider the construction of prediction interval $\hat{C}^{W}_\alpha$, which is independent of the dataset $Z_c$. Therefore, we can apply the \Cref{prop:imperfect_densityratio} and have 
\begin{align}
\label{eq:corollary2}
P\bigg(Y^{n+1}\in\hat{C}^{W}_\alpha(\widetilde{X}^{n+1})\mid M^{n+1}=m\bigg)-(1-\alpha)&\geq 
-d_{TV}(P_m,\tilde{P}_m)\\
&=-(1-2R^{\star}_\gamma). 
\end{align}
Since $\hat{C}^{W}_\alpha$ is constructed independently of $Z_c$, the right-hand side of \cref{eq:corollary2} can be rewritten as
\begin{align*}
&P(Y^{n+1}\in\hat{C}^{W}_\alpha(\widetilde{X}^{n+1})\mid M^{n+1}=m)-(1-\alpha)\\
\geq&-(1-2\, R^{\star}_\gamma)\\
=&-1 +  2\, R^{\star}_\gamma \\
   \geq&-1+2 \, \mathbb{E}_{Z_c}[R_\gamma (\hat{g}_m)]-K\sqrt{\frac{V_{\mathcal{G}_m}}{l}}.
\end{align*}
where \(K > 0\) is a universal constant that may absorb absolute numerical constants such as 2. Under \Cref{ass:stability}, we can apply McDiarmid’s inequality \citep{mcdiarmid1989method} to obtain the following bound: for any $\epsilon>0$, 
$P_{Z_c}\bigg(R_\gamma (\hat{g}_m)-\mathbb{E}_{Z_c}[R_\gamma (\hat{g}_m)])\geq \epsilon\bigg) \leq \exp({-\frac{2 \epsilon^2}{l \beta_l ^2}})$. Since all the risk are bounded by 0 and 1, we apply Hoeffding's inequality and get
$P_{Z_c}(\hat{R}_\gamma(\hat{g}_m)-R_\gamma(\hat{g}_m)\geq \epsilon) \leq \exp(-\frac{2 \epsilon^2 l^2}{l})$.

Replacing $\epsilon $ by $\epsilon/2$, we have 
\begin{equation}
    P_{Z_c}(|\hat{R}_\gamma(\hat{g}_m)-\mathbb{E}_{Z_c}[R_\gamma(\hat{g}_m)]|\leq \epsilon) > 1-\exp(-\frac{\epsilon^2}{2l\beta_l^2})-\exp(\frac{\epsilon^2 l}{2}).
\end{equation}
Substituting into \Cref{eq:corollary2}, we obtain the desired result.

\section{Impact of Estimator Quality on Coverage}
\label{appendix:estimator quality}

In this section, we study the practical implications of density ratio estimation quality on the mask-conditional coverage of weighted conformal prediction. The analysis builds on synthetic data where the true density ratios are analytically tractable.

\subsection{Synthetic Data Generation}

We consider the same synthetic setup as in  \cref{sec:experiments}. Let $X \in \mathbb{R}^d$ be a $d$-dimensional Gaussian random variable and $Y$ be the linear response with additive Gaussian noise:
\begin{align*}
X &\sim \mathcal{N}(\mu_X, \Sigma_X), \\
Y \mid X &= \beta^\top X + \varepsilon, \quad \varepsilon \sim \mathcal{N}(0, \sigma^2).
\end{align*}
The joint distribution $(X,Y)$ is Gaussian with mean and covariance:
\begin{align*}
\mu_{XY} &= \begin{bmatrix} \mu_X \\ \beta^\top \mu_X \end{bmatrix}, \\
\Sigma_{XY} &= \begin{bmatrix} \Sigma_X & \Sigma_X \beta \\ \beta^\top \Sigma_X & \beta^\top \Sigma_X \beta + \sigma^2 \end{bmatrix}.
\end{align*}

We simulate Missing Completely at Random (MCAR) by randomly masking each feature $X_j$ independently with probability $0.5$. Let $M \in \{0,1\}^d$ denote the missingness indicator.

By Gaussian conditioning, for each missing pattern $M=m$, the conditional distribution of the missing covariates given observed covariates and $Y$ is:
\begin{align*}
X_{\text{mis}} \mid X_{\text{obs}}, Y &\sim \mathcal{N}(\mu_{\text{mis} \mid \text{obs}, Y}, \Sigma_{\text{mis} \mid \text{obs}, Y}),
\end{align*}
where:
\begin{align*}
\mu_{\text{mis} \mid \text{obs}, Y} &= \mu_{\text{mis}} + \Sigma_{\text{mis}, [X_{\text{obs}}, Y]} \Sigma_{[X_{\text{obs}}, Y]}^{-1} \left(\begin{bmatrix} X_{\text{obs}} \ Y \end{bmatrix} - \mu_{[X_{\text{obs}}, Y]} \right), \\
\Sigma_{\text{mis} \mid \text{obs}, Y} &= \Sigma_{\text{mis}, \text{mis}} - \Sigma_{\text{mis}, [X_{\text{obs}}, Y]} \Sigma_{[X_{\text{obs}}, Y]}^{-1} \Sigma_{[X_{\text{obs}}, Y], \text{mis}}.
\end{align*}

\subsection{Perturbed Imputation Strategy}

To emulate estimation noise in imputation, we perturb the conditional distribution:
\begin{align*}
\varphi(\tilde{x}, y)_{\text{mis}} &\sim \mathcal{N}(\tilde{\mu}_{\text{mis}}, \tilde{\Sigma}_{\text{mis}}),
\end{align*}
with:
\begin{align*}
\tilde{\mu}_{\text{mis}} &= \alpha \cdot \mu_{\text{mis} \mid \text{obs}, Y} + \Delta, \\
\tilde{\Sigma}_{\text{mis}} &= \Sigma_{\text{mis} \mid \text{obs}, Y} + \tau I,
\end{align*}
where $\alpha > 0$, $\Delta \in \mathbb{R}^{|\text{mis}|}$ is a bias term, and $\tau > 0$ is variance inflation.

As discussed in the main text, the key ingredient for weighted conformal prediction is the likelihood ratio:
\begin{align*}
\omega_m(x_{\text{obs}}, y) = \frac{dP_m(x_{\text{obs}}, y)}{dQ_m(x_{\text{obs}}, y)},
\end{align*}
where $P_m$ is the true mask-conditional distribution of $(X,Y)|M=m$, and $Q_m$ is the induced distribution from imputed calibration data. In this setup, both are computable: $P_m$ is the original Gaussian, and $Q_m$ is a mixture Gaussian from the perturbed imputation. Under our perturbed imputation strategy, all the terms are tractable.

\subsection{Empirical Evaluation}

We perform controlled simulations under fixed mask $m = [1,1,1,0,0]$ with $d=5$, $\mu_X = \mathbf{1}_d$, and covariance $\Sigma_X = \rho \mathbf{1}_d \mathbf{1}_d^\top + (1 - \rho) I$, $\rho = 0.8$. We generate 500 training, 200 calibration, and 100 test samples. For perturbation, we set $\Delta=0.5, \alpha=1.5, \tau=0.2$. The target coverage is 90\%. We then evaluate the following:
\begin{enumerate}
\item \textbf{No correction:} miscoverage due to distribution shift.
\item \textbf{Oracle correction:} use exact $\omega_m$.
\item \textbf{Estimated correction:} learn an estimator $\hat{\omega}_m$ of $\omega_m$ using \cref{alg:likelihood_ratio_estimation}.
\item \textbf{Perturbed estimation:} apply controlled noise to $\hat{\omega}_m$ test robustness of different qualities of density estimators (see \Cref{appendix:controlled_noise} for details).
\end{enumerate}

Each setting is repeated 100 times to compute coverage statistics and its sensitivity to estimator quality. We find that:
\begin{itemize}
\item \textbf{Uncorrected CP} underestimates coverage (86\%) due to test/calibration shift.
\item \textbf{Oracle-corrected CP} achieves near-perfect coverage (89--90\%).
\item \textbf{Estimated weights} by using \Cref{alg:likelihood_ratio_estimation} achieve our the desired coverage, though showing conservative behavior (93\% coverage).
\item \textbf{Estimator degradation} leads to systematic under-coverage as correlation between estimated and true weights drops (see \Cref{appendix:controlled_noise} for details).
\end{itemize}

\subsection{Impact of Controlled Perturbation in Density Ratio Estimation}
\label{appendix:controlled_noise}

To examine empirically the robustness of our method under imperfect density ratio estimation, we introduce controlled perturbations. This complements the theoretical and empirical findings discussed in the main text and addresses concerns regarding the sensitivity of our approach to density ratio quality.

\paragraph{Noise Injection into Estimated Density Ratio.} In our implementation, density ratios are estimated using Algorithm~2 (a histogram-based gradient boosting classifier). To simulate various levels of estimation quality, we add Gaussian noise to the logarithm of the estimated density ratios:
\begin{align*}
\log \tilde{\omega}_m = \log \hat{\omega}_m + \varepsilon, \quad \varepsilon \sim \mathcal{N}(0, \sigma^2),
\end{align*}
then exponentiate back to obtain noisy $\tilde{\omega}_m$. This ensures the perturbation is multiplicative and realistically models deviations in the ratio space.

\paragraph{Empirical Evaluation.} For each of 100 repetitions, we pertube the estimator of density ratio and we evaluate the quality of density ratio estimator on calibration data using the Pearson correlation coefficient between true and estimated density ratio values. We then measure the mask-conditional coverage obtained from our weighted CP method. Building on this, for each repetition, we have a different quality of density ratio estimator (in terms of Pearson correlation) and an associated coverage. We then draw the scatter plot of the correlation against empirical coverage in \Cref{fig:correlation_coverage}. We observe a strong positive relationship: when correlation exceeds 0.3, coverage stabilizes around the nominal level of 90\%, demonstrating the method’s robustness to moderate estimation error.

\begin{figure}[h]
\centering
\includegraphics[width=0.7\textwidth]{./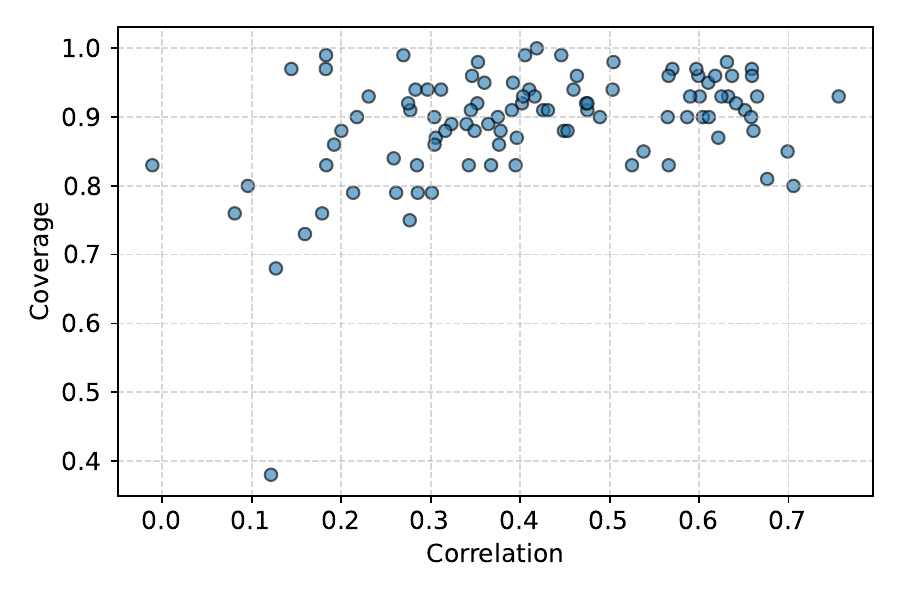}
\caption{Coverage vs. Correlation of estimated and true density ratios. Each point represents one experiment. High correlation yields near-nominal coverage.}
\label{fig:correlation_coverage}
\end{figure}

\subsection{Evaluation of theoretical Miscoverage Bound.} 
\label{exp:imperfect estimation}
Our goal is to empirically verify the validity of Proposition~\ref{prop:estimablebound}, which asserts that the miscoverage due to imperfect density ratio estimation can be upper bounded by quantities estimated from data. Specifically, we construct the distributions \(P_m\) and \(\tilde{P}_m\) using an auxiliary dataset of 500 i.i.d. samples, train a classifier $\hat{g}_m$ to distinguish them, and evaluate whether the resulting classification error yields a valid upper bound on the observed miscoverage.

By \Cref{proof:prop3}, we have:
\begin{align*}
 &P(Y \in \widehat{C}_{\alpha}^{\widetilde{W}}(\tilde{X}) \mid M=m) -90\% \geq- (1 - 2\widehat{\mathcal{R}}(\hat{g}_m)). \\
 \leftrightarrow & 90\% - P(Y \in \widehat{C}_{\alpha}^{\widetilde{W}}(\tilde{X}) \mid M=m) \leq (1 - 2\widehat{\mathcal{R}}(\hat{g}_m)).
\end{align*}

 For each repetition of experiment, we can compute a miscoverage by \[\text{Miscoverage}=(90\%-\text{True coverage}).\] We can also compute the empirical error $\widehat{\mathcal{R}}(\hat{g}_m)$ of the classifier between $P_m$ and $\tilde{P}_m$ by using an extra dataset , and the empirical miscoverage $(1 - 2\widehat{\mathcal{R}}(\hat{g}_m))$. Ideally, we should observe that all the miscoverage is controlled by $(1 - 2\widehat{\mathcal{R}}(\hat{g}_m))$. In \Cref{fig:error_bound_miscoverage}, we compare empirical miscoverage with the theoretical bound $(1 - 2\widehat{\mathcal{R}}(\hat{g}_m))$. Most observed miscoverage values lie below the red line $y=x$, verifying that the bound is valid.

\begin{figure}[h]
\centering
\includegraphics[width=0.7\textwidth]{./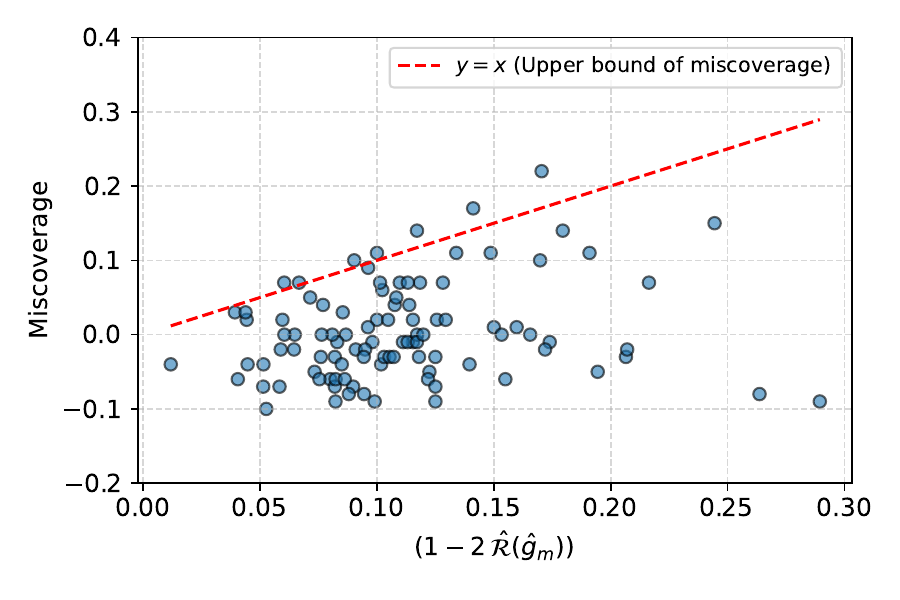}
\caption{Miscoverage vs estimated bound. Points below $y=x$ confirm the bound’s conservativeness.}
\label{fig:error_bound_miscoverage}
\end{figure}

Together, these findings demonstrate that our method is empirically robust to imperfect density ratio estimation, and theoretical bounds on miscoverage can be derived and validated in practice.

\section{Detailed experiments}
\label{appendix:comp}
\subsection{Running time comparison}
\label{app:running time}
We compare the running time of all 4 methods on the same dataset, i.e., 100 points for each missingness pattern. All experiments are running on the same server with CPU Intel XeonGold 6252 @ 2.10GHz. The running time of 4 methods on two datasets is shown in \Cref{tbl:runningtime}. Our proposed AR-Corrected method has a running time comparable to MDA-Exact, which is much faster than MDA-Nested and weighted CP. This aligns with our expectation since the MDA-Nested requires modifying test points and calibration points differently, and the weighted method requires using a "grid search" like method to evaluate the prediction interval, which cannot be solved explicitly. Therefore, our proposed acceptance rejection-based method can be a good candidate for substituting weighted conformal prediction, especially when the weighted CP interval has no simple and explicit form.

\begin{table}[hbt!]
\caption{Comparison of running time of 4 methods.}
\label{tbl:runningtime}
\vskip 0.15in
\begin{center}
\begin{footnotesize} 
\begin{sc}
\setlength{\tabcolsep}{4pt}
\renewcommand{\arraystretch}{1.1} 
\begin{tabular}{lccc}
\toprule
\textbf{Method} & \textbf{Synthetic dataset} & \textbf{Concrete dataset}  \\
\midrule
MDA-Exact        & 17034s    & 2031s     \\
MDA-Nested       & 510932s   & 43225s   \\
Weighted CP  & 228638s   & 37983s  \\
ARC CP    & 21320s    & 4766s  \\
\bottomrule
\end{tabular}
\end{sc}
\end{footnotesize}
\end{center}
\vskip -0.1in
\end{table}

\subsection{Another real dataset}
\paragraph{Bike Sharing
Dataset from the UCI repository\citep{bike_sharing_275}}
\label{bike_sharing}
The target value is the total count of bikes rented during each hour. There are four continuous covariates and 12 discrete covariates. 5000 training data and 100 calibration data are selected. Another 100 test data for each pattern of missingness are generated under MCAR mechanism with 50\% missing values for each feature. Since we have four continuous covariates, 15 $(2^4-1)$ types of missingness patterns exist.

\begin{table}[H]
\caption{Comparison on Bike sharing dataset for different methods. The average coverage and interval width of 15 different missingness patterns are listed.}
\label{tbl:bikesharing}
\vskip -0.2in
\begin{center}
\begin{small}
\begin{sc}
\setlength{\tabcolsep}{6pt} 
\renewcommand{\arraystretch}{1.2}
\begin{tabular}{lcc}
\toprule
\textbf{Method} & \textbf{Avg. Cov} & \textbf{Avg.Width} \\
\midrule
MDA-Exact        & $92.9 \pm 0.7$            & $\infty$ \\
MDA-Nested       & $91.8 \pm 0.5$            & $336.3 \pm 2.3$ \\
\textbf{Weighted CP}  & $91.9 \pm 0.6$            & $\mathbf{306.4 \pm 2.2}$ \\
\textbf{ARC CP}    & $93.0 \pm 0.6$            & $\mathbf{309.9 \pm 3.1}$ \\
\bottomrule
\end{tabular}
\end{sc}
\end{small}
\end{center}
\vskip -0.1in
\end{table}

The results are shown in \Cref{tbl:bikesharing}, we can observe similar results: MDA-Exact may not work for certain masks, thus leading the average width of the interval to be infinite, and MDA-Nested over-estimates the uncertainty, whereas our methods provide a prediction set with shorter width while being mask-conditional valid.

\paragraph{MEPS19 dataset}
The MEPS19 dataset refers to the 2019 Medical Expenditure Panel Survey conducted by the Agency for Healthcare Research and Quality (AHRQ)\footnote{\url{https://meps.ahrq.gov/data_stats/download_data_files.jsp}}. It is a large-scale, nationally representative survey of families and individuals in the United States, collecting detailed information on healthcare utilization, expenditures, insurance coverage, and health status. The dataset contains both continuous and categorical covariates: in our setup, we use 5 continuous and 134 discrete covariates.

The target variable in our analysis, \textit{UTILIZATION\_reg}, is an aggregated measure of healthcare utilization, derived from MEPS variables describing the frequency and intensity of medical service use. Due to the wide range of the target values, applying the Weighted Conformal Prediction (Weighted CP) method is computationally challenging, as it requires evaluating the conformity score across a dense grid of possible outcomes. Therefore, we focus on comparing MDA-Exact, MDA-Nested, and our ARC-CP method.

We randomly select 1000 training samples and 500 calibration samples. The 5 continuous covariates are subject to missingness following a Missing Completely at Random (MCAR) mechanism. We repeat the experiment 50 times to estimate the average coverage and corresponding confidence intervals. The results, shown in \Cref{tbl:meps19}, indicate that all methods achieve the desired coverage, while our ARC-CP method yields the shortest average interval length.

\begin{table}[H]
\caption{Comparison on Bike sharing dataset for different methods. The worst-case coverage and interval width of 32 different missingness patterns are listed.}
\label{tbl:meps19}
\vskip -0.2in
\begin{center}
\begin{small}
\begin{sc}
\setlength{\tabcolsep}{6pt} 
\renewcommand{\arraystretch}{1.2}
\begin{tabular}{lcc}
\toprule
\textbf{Method} & \textbf{Coverage} & \textbf{Interval width} \\
\midrule
MDA-Exact        & $94.03-95.01$            & $33.0$ \\
MDA-Nested       & $94.29-95.15$            & $34.1$ \\
\textbf{ARC CP}    & $94.04-95.04$            & $\mathbf{32.08}$ \\
\bottomrule
\end{tabular}
\end{sc}
\end{small}
\end{center}
\vskip -0.1in
\end{table}

\subsection{The necessity for correction}
\label{appendix:correction}
To show that our correction method can guarantee MCV effectively, we do experiments on the dataset in \Cref{section:weight correction} with and without applying the correction methods. The comparison for whether using the correction method in terms of missingness coverage and interval width is shown in \Cref{img:correction_app,img:correction_width_app}. Without correction, the worst mask conditional coverage is 89.1\%, showing that without a suitable weight/acceptance for adjusting original Calibration, the distribution shift results in an under-coverage of the true label.

\begin{figure}[H]
\begin{center}
\centerline{\includegraphics[width=.9\textwidth]{./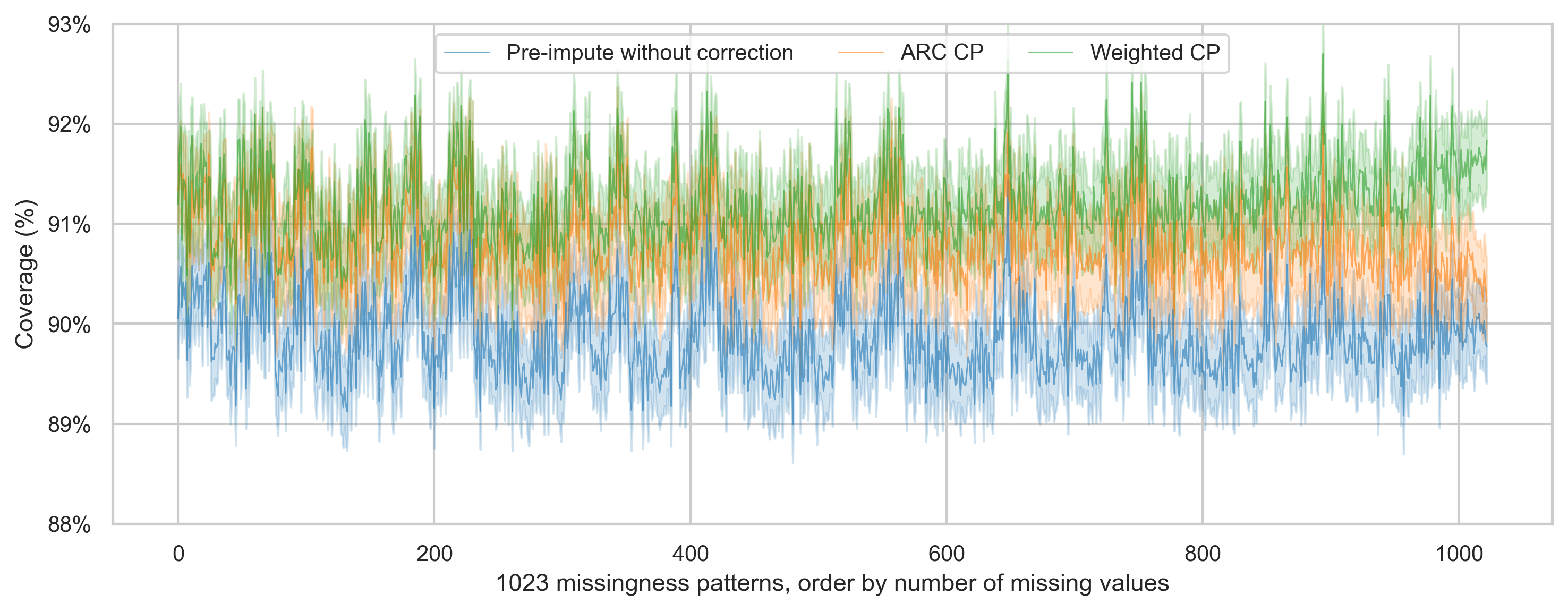}}
\caption{Synthetic dataset: Comparison of prediction coverage (mean and 99\% confidence interval) across 1023 missingness patterns, with and without the proposed correction method.}
\label{img:correction_app}
\end{center}
\end{figure}

\begin{figure}[H]
\begin{center}
\centerline{\includegraphics[width=.9\textwidth]{./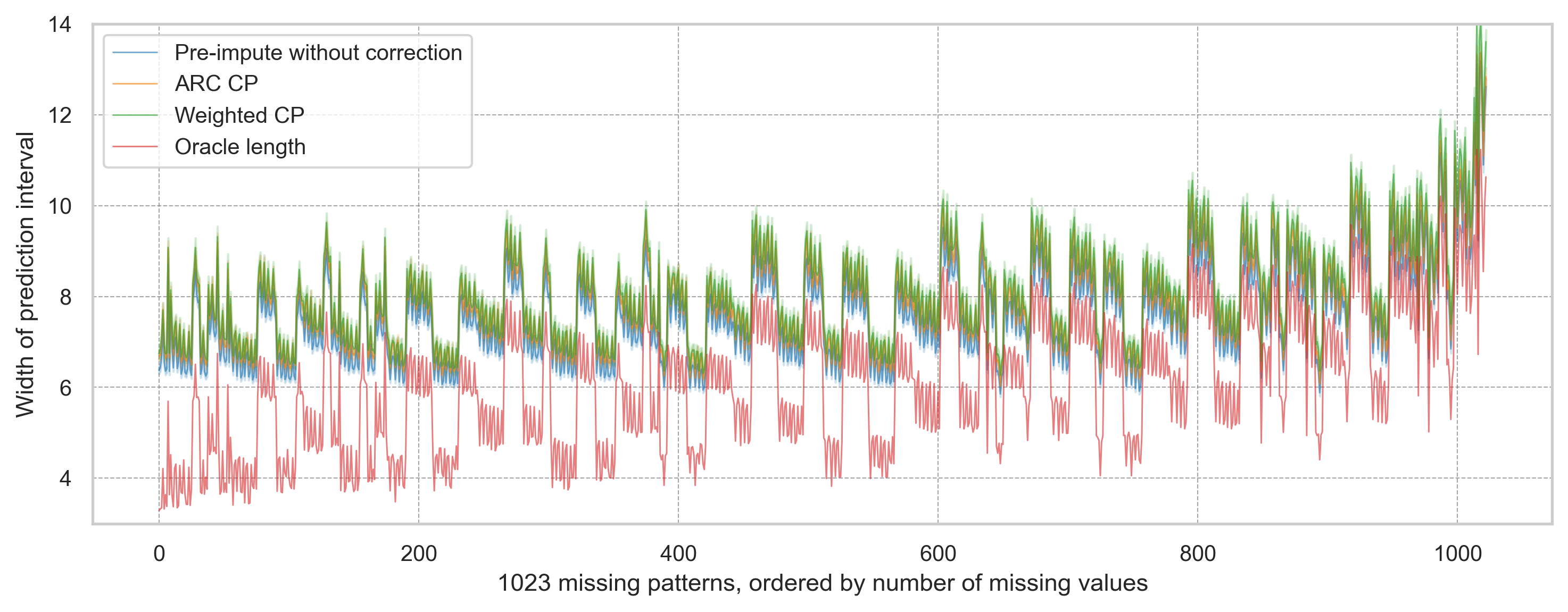}}
\caption{Synthetic dataset: Comparison of the width of prediction interval (mean and 99\% confidence interval) across 1023 missingness patterns, with and without the proposed correction method.
}
\label{img:correction_width_app}
\end{center}
\end{figure}

\subsection{Experiments of concrete dataset}
\label{section:compress}
We provide the missingness-wise coverage for concrete dataset, as shown in \Cref{img:concrete_coverage}.

\begin{figure}[H]
\begin{center}
\centerline{\includegraphics[width=0.6\textwidth]{./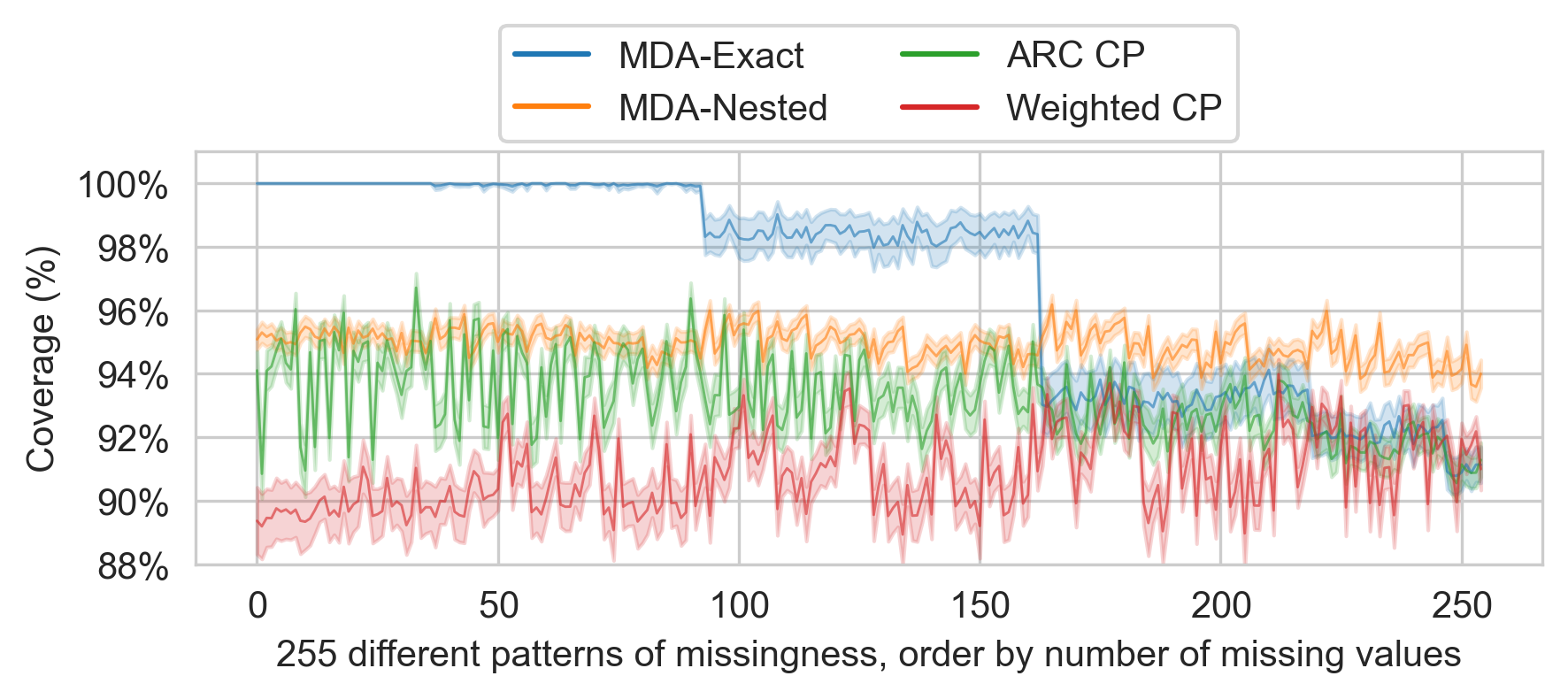}}
\caption{Concrete dataset: mean and 99\% confidence interval of prediction coverage with respect to 255 missingness patterns
}
\label{img:concrete_coverage}
\end{center}
\end{figure}

For MAR settings, the worst-case missingness-wise coverage and the marginal prediction interval width for the four methods are in \Cref{tbl:marconcrete}:

\begin{table}[h]
\centering
\caption{Worst-case coverage (95\% CI) and average prediction interval width over 15 masks, concrete dataset with 4 possible MAR missing covariates}
\label{tbl:marconcrete}
\begin{tabular}{lcccc}
\toprule
Mechanisms & MDA-Exact & MDA-Nested & Weighted CP & ARC CP \\
\midrule
MAR & 89.1\% $\pm$ 1.64\% / 29.1 & 90.8\% $\pm$ 1.43\% / 31.4 & 89.1\% $\pm$ 1.91\% / 29.8 & 90.6\% $\pm$ 1.42\% / 28.9  \\
\bottomrule
\end{tabular}
\vspace{-1em}
\end{table}

For self-masked MNAR settings, it is generally difficult to generate enough test points per pattern, we allow 2 covariates to be missing. The worst-case missingness-wise coverage and the marginal prediction interval width for the four methods are in \Cref{tbl:mnarconcrete}:

\begin{table}[h]
\centering
\caption{Worst-case coverage (95\% CI) and average prediction interval width over 15 masks, concrete dataset with 4 possible missing covariates}
\label{tbl:mnarconcrete}
\begin{tabular}{lcccc}
\toprule
Mechanisms & MDA-Exact & MDA-Nested & Weighted CP & ARC CP \\
\midrule
MNAR & 89.0\% $\pm$ 1.78\% / 30.1 & 88.7\% $\pm$ 1.76\% / 30.4 & 90.0\% $\pm$ 1.67\% / 30.8 & 89.0\% $\pm$ 1.83\% / 30.1  \\
\bottomrule
\end{tabular}
\vspace{-1em}
\end{table}

\subsection{Complementary of synthetic experiments for other settings}
\subsubsection{Less missing values}
We consider the same setting of the synthetic dataset, except we have 20\% missing values instead of 50\%. As we said, when there are many missing values in the dataset, and we want to achieve MCV for a test point with almost no missing values, the MDA-Exact could give an uninformative prediction interval due to the lack of sample points in the calibration dataset. Still, this method should work well when we have fewer missing values. The comparison of MDA-Exact, MDA-Nested, as well as our methods, is in \Cref{img:comp_interval_width_20missing,img:comp_interval_coverage_20missing}. We observe that MDA-Exact is less likely to give an uninformative prediction set, but both the MDA-Exact and Nested show a missingness conditional over-coverage when the test point has fewer missing values. Our method shows more stability in coverage for all patterns of missingness. The AR-corrected method reduces 10.7\% the average width of prediction interval than MDA-Nested for all patterns of missingness.

\begin{figure}[H]
\begin{center}
\centerline{\includegraphics[width=0.6\textwidth]{./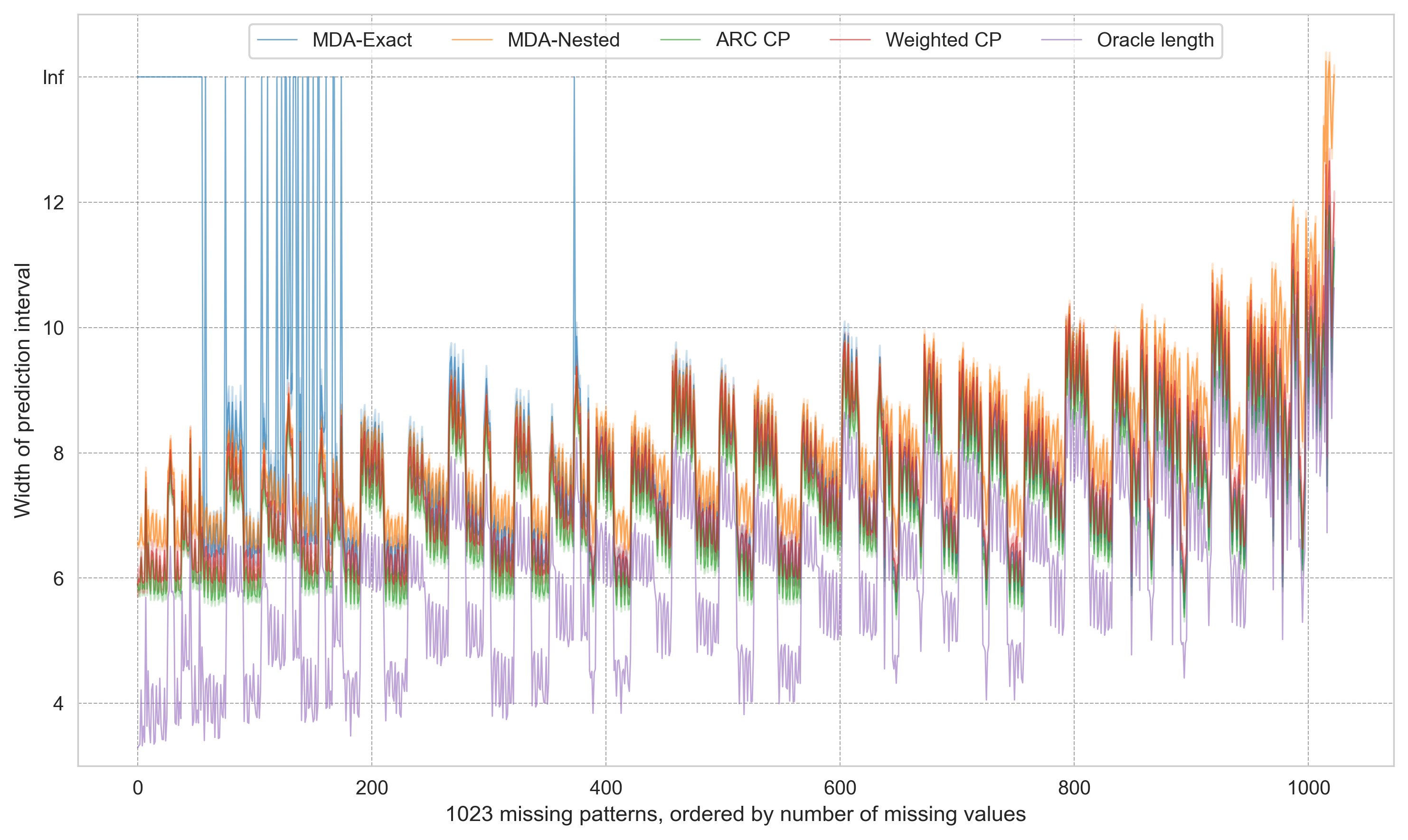}}
\caption{Synthetic dataset with 20\% missing values: mean and 99\% confidence interval of prediction interval width with respect to 1023 missingness patterns, comparison for whether using correction method.}
\label{img:comp_interval_width_20missing}
\end{center}
\end{figure}

\begin{figure}[H]
\begin{center}
\centerline{\includegraphics[width=0.7\textwidth]{./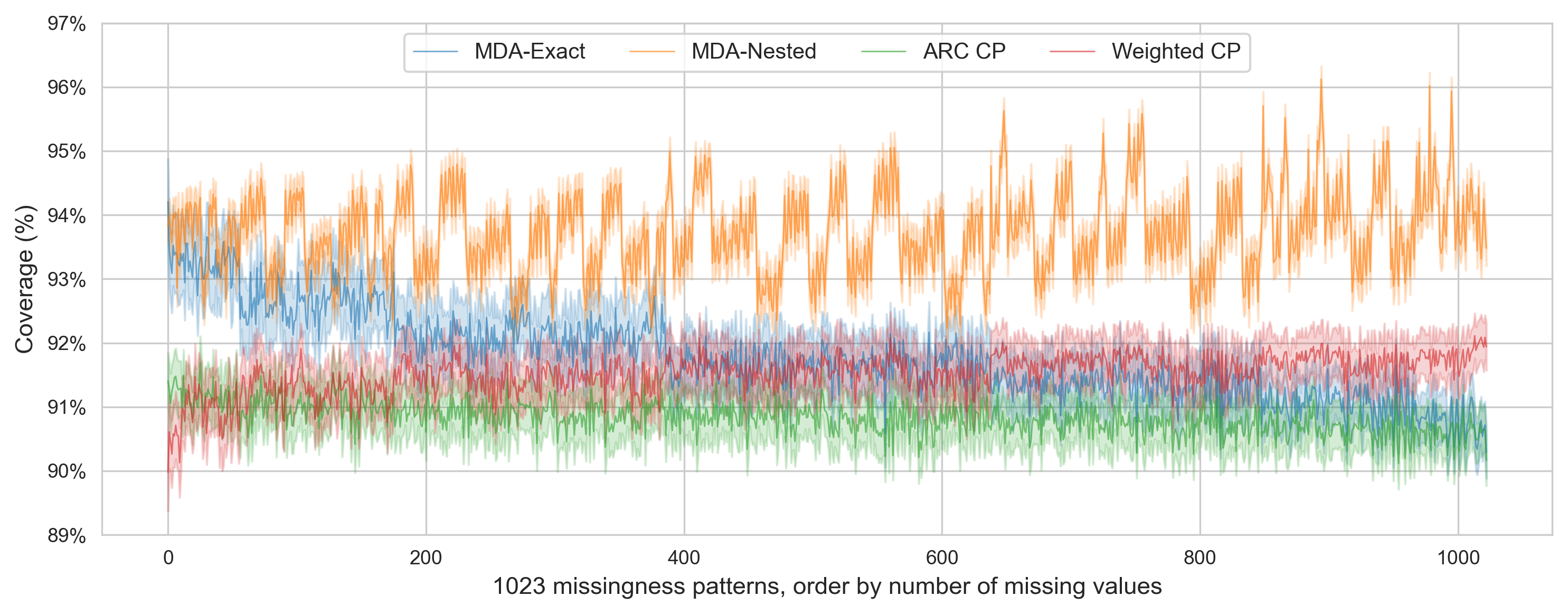}}
\caption{Synthetic dataset with 20\% missing values: mean and 99\% confidence interval of prediction coverage with respect to 1023 missingness patterns, comparison for whether using correction method.}
\label{img:comp_interval_coverage_20missing}
\end{center}
\end{figure}

\subsubsection{Synthetic dataset with independent features $\rho=0$}
Similarly to \citet{zaffran2024predictiveuncertaintyquantificationmissing}, we run another experiment on a dataset in which the features are independent ($\rho=0$), leading to an imputation that can not be better than the marginal expectation of the features. We also keep 20\% missing values. The same results are shown in \Cref{img:comp_interval_coverage_20missing_independent,img:comp_interval_width_20missing_independent}. We observe similar results: the AR-corrected method reduces 8.9\% the average width of prediction interval than MDA-Nested for all patterns of missingness.

\begin{figure}[H]
\begin{center}
\centerline{\includegraphics[width=0.8\textwidth]{./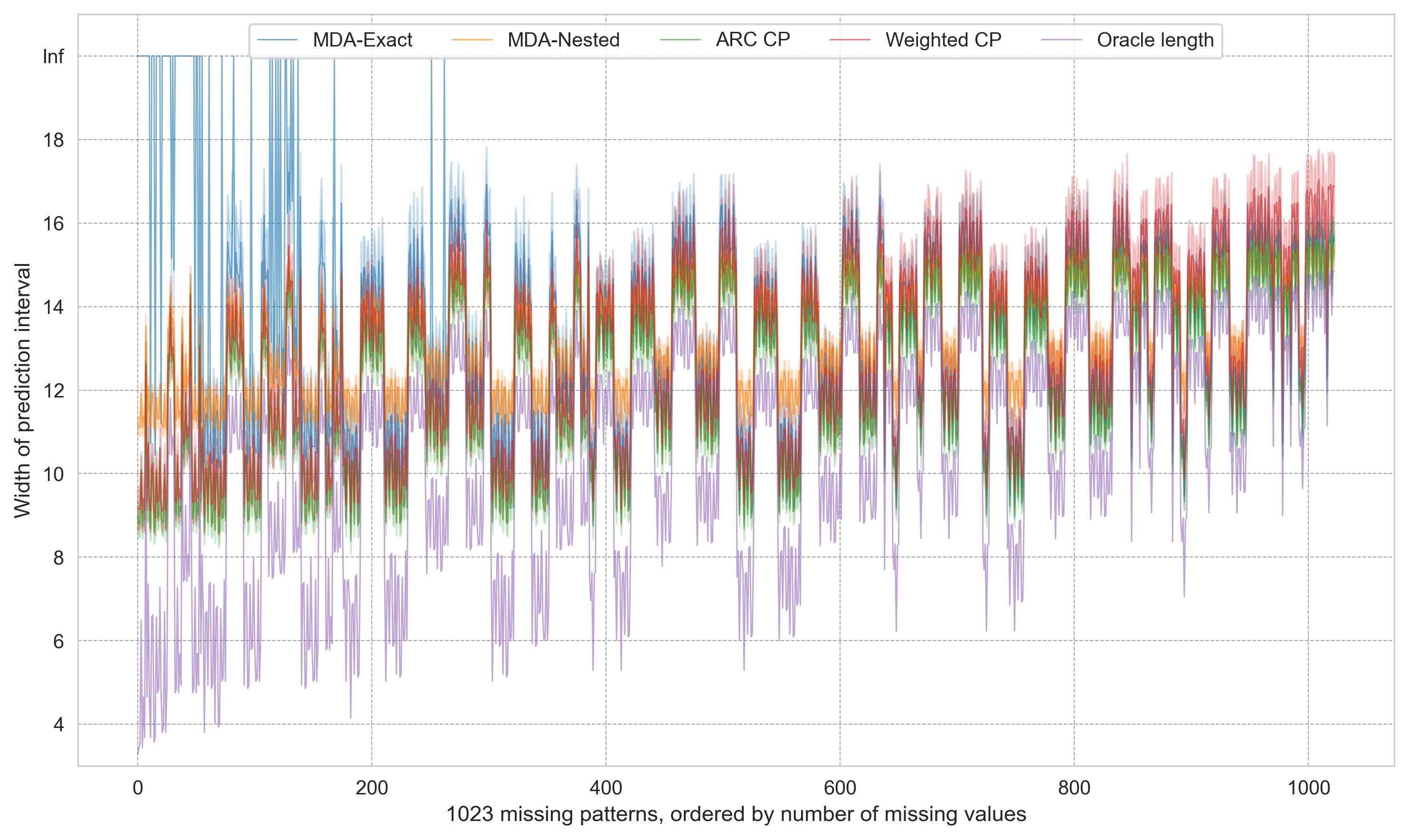}}
\caption{Synthetic dataset with 20\% missing values and independent features: mean and 99\% confidence interval of prediction interval width with respect to 1023 missingness patterns, comparison for whether using correction method.}
\label{img:comp_interval_width_20missing_independent}
\end{center}
\end{figure}

\begin{figure}[H]
\begin{center}
\centerline{\includegraphics[width=0.8\textwidth]{./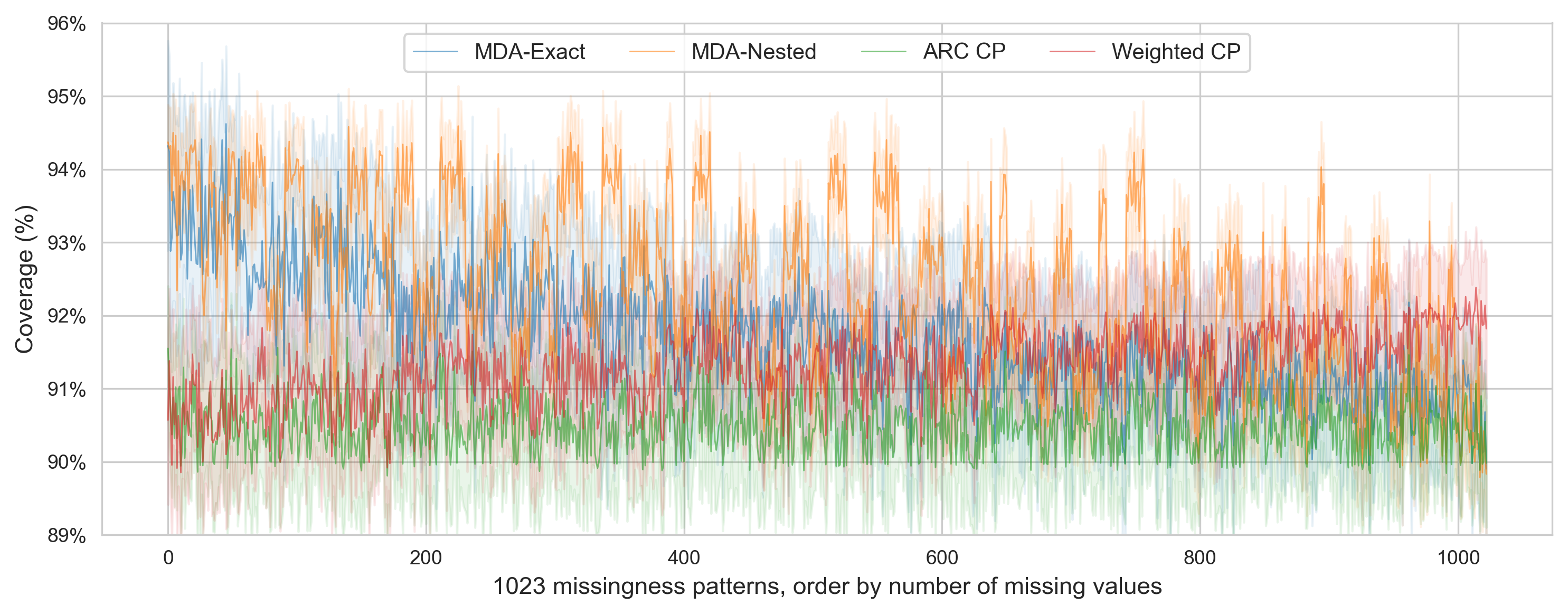}}
\caption{Synthetic dataset with 20\% missing values and independent features: mean and 99\% confidence interval of prediction coverage with respect to 1023 missingness patterns, comparison for whether using correction method.}
\label{img:comp_interval_coverage_20missing_independent}
\end{center}
\end{figure}

\subsection{Unsmoothed results for  \Cref{synthetic_width,synthetic_coverage}}
The unsmoothed results in the Experiments \Cref{sec:experiments} are shown in \Cref{synthetic_width_unsmoothed,synthetic_coverage_unsmoothed}.

\begin{figure}[ht]
\vskip 0.1in
\begin{center}
\centerline{\includegraphics[width=0.8\textwidth]{./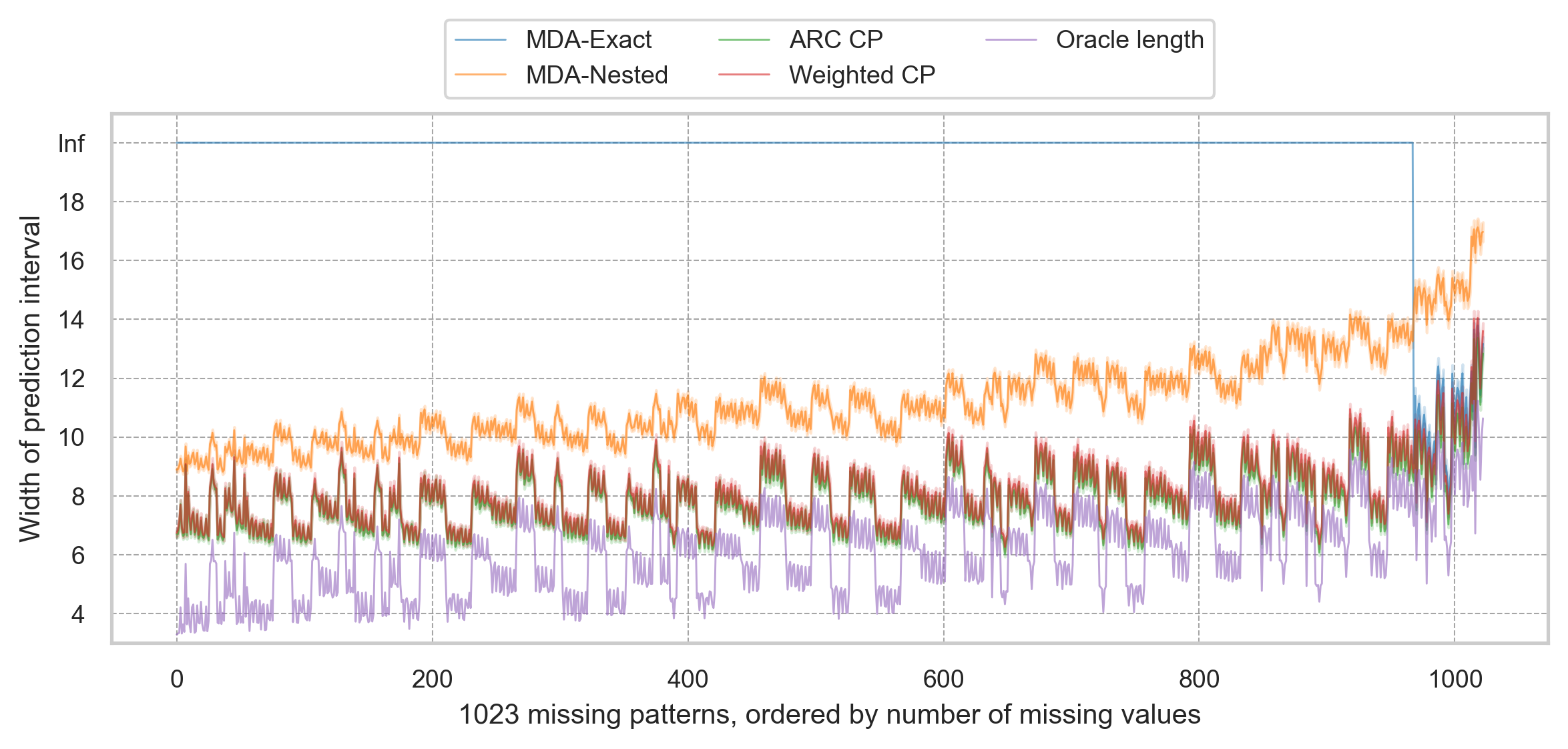}}
\caption{Synthetic dataset: unsmoothed results for mean and 99\% confidence interval of prediction width with respect to 1023 missingness patterns.}
\label{synthetic_width_unsmoothed}
\end{center}
\vskip -0.3in
\end{figure}
\begin{figure}[ht]
\begin{center}
\centerline{\includegraphics[width=0.8\textwidth]{./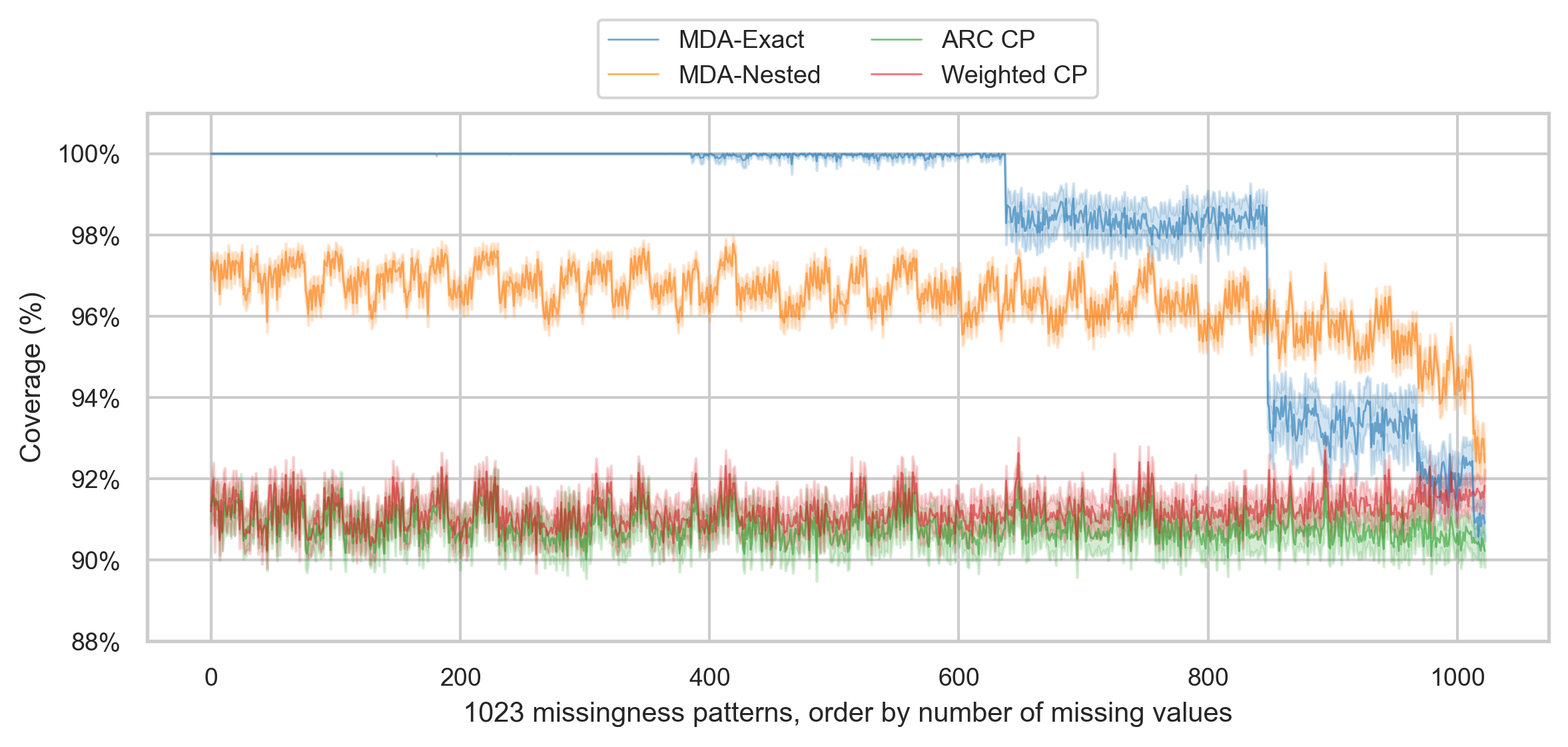}}
\caption{Synthetic dataset: unsmoothed results for mean and 99\% confidence interval of prediction coverage with respect to 1023 missingness patterns.}
\label{synthetic_coverage_unsmoothed}
\end{center}
\vskip -0.1in
\end{figure}

\end{document}